\documentclass{article}
\usepackage{iclr2027_conference,times}

\usepackage{amsmath}
\usepackage{amssymb}
\usepackage{amsthm}
\usepackage{mathtools}
\usepackage{bm}

\usepackage{graphicx}
\usepackage{pgffor}
\usepackage{subcaption}
\usepackage{booktabs}
\usepackage{multirow}
\usepackage{array}
\usepackage{arydshln}
\usepackage{wrapfig}
\usepackage{needspace}

\usepackage{algorithm}
\usepackage{algorithmic}

\usepackage{xspace}
\usepackage{xcolor}
\usepackage{microtype}
\usepackage{hyperref}
\usepackage[capitalize,nameinlink]{cleveref}

\hypersetup{
    colorlinks=true,
    linkcolor=blue,
    citecolor=blue,
    urlcolor=blue
}

\newtheorem{theorem}{Theorem}
\newtheorem{proposition}{Proposition}

\theoremstyle{definition}

\theoremstyle{remark}

\graphicspath{{figures/}}

\let\iclrstandardbf\bf

\newcommand{\R}{\mathbb{R}}

\renewcommand{\bf}[1]{\textbf{#1}}

\newcommand{\I}{\mathbf{I}}
\newcommand{\D}{\mathbf{D}}
\newcommand{\X}{\mathbf{X}}
\newcommand{\Xco}{\X^{\mathrm{co}}}
\newcommand{\Xreg}{\X^{\mathrm{reg}}}

\newcommand{\Xdown}{\X_{\mathrm{down}}}

\newcommand{\Xcohad}{\Xco_\mathrm{had}}

\newcommand{\A}{\mathbf{A}}
\newcommand{\Y}{\mathbf{Y}}
\newcommand{\V}{\mathbf{V}}
\newcommand{\W}{\mathbf{W}}

\newcommand{\Wdown}{\W_{\mathrm{down}}}

\newcommand{\Pmat}{\mathbf{P}}
\newcommand{\Q}{\mathbf{Q}}

\newcommand{\Z}{\mathbf{Z}}
\newcommand{\Hmat}{\mathbf{H}}

\newcommand{\Rmat}{\mathbf{R}}
\newcommand{\Kmat}{\mathbf{K}}

\newcommand{\Pimat}{\mathbf{\Pi}}
\newcommand{\Lambdamat}{\mathbf{\Lambda}}

\newcommand{\Vtilde}{\widetilde{\V}}
\newcommand{\Ztilde}{\widetilde{\Z}}

\newcommand{\entry}[3]{(#1)_{#2,#3}}
\newcommand{\Aentry}[2]{\entry{\A}{#1}{#2}}
\newcommand{\Xentry}[2]{\entry{\X}{#1}{#2}}
\newcommand{\Xdownentry}[2]{\entry{\Xdown}{#1}{#2}}
\newcommand{\Xcoentry}[2]{\entry{\Xco}{#1}{#2}}
\newcommand{\Xregentry}[2]{\entry{\Xreg}{#1}{#2}}

\newcommand{\Xcohadentry}[2]{\entry{\Xcohad}{#1}{#2}}

\newcommand{\Zentry}[2]{\entry{\Z}{#1}{#2}}
\newcommand{\Ztildeentry}[2]{\entry{\Ztilde}{#1}{#2}}
\newcommand{\Hentry}[2]{\entry{\Hmat}{#1}{#2}}

\newcommand{\Wentry}[2]{\entry{\W}{#1}{#2}}

\newcommand{\Wdownentry}[2]{\entry{\Wdown}{#1}{#2}}

\newcommand{\diag}{\operatorname{diag}}

\newcommand{\quant}{\mathcal{Q}}

\newcommand{\obj}{\mathcal{J}}

\newcommand{\objagwc}{\obj_{\mathrm{AGWC}}}
\newcommand{\objco}{\obj_{\mathrm{co}}}
\newcommand{\objreg}{\obj_{\mathrm{reg}}}

\newcommand{\order}{\mathcal{O}}
\newcommand{\Cco}{\mathcal{C}_{\mathrm{co}}{}}
\newcommand{\Nco}{N_{\mathrm{co}}{}}

\newcommand{\proposal}{L2-SmoothRot\xspace}
\newcommand{\smoothquant}{SmoothQuant\xspace}
\newcommand{\omniquant}{OmniQuant\xspace}
\newcommand{\flatquant}{FlatQuant\xspace}
\newcommand{\spinquant}{SpinQuant\xspace}
\newcommand{\quarot}{QuaRot\xspace}

\newcommand{\smoothrot}{SmoothRot\xspace}
\newcommand{\quip}{QuIP\xspace}
\newcommand{\quipsharp}{QuIP\#\xspace}
\newcommand{\dfrot}{DFRot\xspace}

\let\bf\iclrstandardbf

\title{
    Understanding LLM Quantization through Activation-Guided Compensation and Orthogonal Residuals
}

\author{
    Yamato Narita\textsuperscript{1} and Issei Sato\textsuperscript{2}\\
    Department of Computer Science, The University of Tokyo \\
    \texttt{\{narita-yamato644\textsuperscript{1},sato\textsuperscript{2}\}@g.ecc.u-tokyo.ac.jp}
}

\iclrfinalcopy

\begin{document}

\maketitle
\lhead{Preprint. Under review.}

\begin{abstract}
\noindent
Post-training weight-activation quantization reduces the memory and inference costs of large language models, but aggressive W4A4 quantization remains difficult because activation outliers degrade effective quantization resolution.
Although weight optimization, channel-wise scaling, and orthogonal rotation mitigate this problem, the error components they address and their relationship remain unclear.
Using an exact decomposition of local weight-activation quantization error into an activation-guided weight compensation term and an orthogonal residual, we bound the residual using persistent channel-wise outlier and regular activation quantities.
This decomposition clarifies which error components can be addressed by weight compensation and which require transformation design.
We then use the residual bounds to derive practical guidelines for applying randomized Hadamard rotation, sign selection, and channel scaling.
In particular, the analysis explains how random signs suppress constructive interference among persistent outlier channels, how sampling multiple sign patterns can improve transformation selection, and how second-moment balancing leads to an $L_2$ scaling rule while a further relaxation recovers SmoothQuant-style $L_\infty$ scaling.
We evaluate these guidelines through backpropagation-free configurations across eight Llama and Mistral models, obtaining performance competitive with gradient-trained SpinQuant.
\end{abstract}

\section{Introduction}

Large language models (LLMs) achieve strong performance across a wide range of tasks, but their growing memory and inference costs hinder practical deployment.
Post-training quantization (PTQ) reduces these costs using only a small calibration set, yet aggressive 4-bit weight and 4-bit activation quantization (W4A4) remains challenging because large activation values enlarge quantization ranges and reduce the effective resolution available for ordinary values~\citep{frantar2023gptq,dettmers2022llmint8,xiao2023smoothquant}.
Existing methods address this difficulty in different ways: GPTQ-type methods optimize quantized weights to preserve full-precision layer outputs, scaling-based methods such as \smoothquant redistribute quantization difficulty between activations and weights, and rotation-based methods such as \quarot and \spinquant transform activation coordinates to improve low-bit quantization~\citep{frantar2023gptq,xiao2023smoothquant,ashkboos2024quarot,liu2025spinquant}.
Other hybrid and learned approaches combine or optimize these transformations in different ways~\citep{czako2026smoothrot,shao2024omniquant,sun2025flatquant}.
Despite their effectiveness, it remains unclear which components of the quantization error these operations address and how their roles are related.

In this paper, we address this gap through a local error-decomposition framework for weight-activation PTQ.
We exactly decompose the local quantization error into an activation-guided weight compensation (AGWC) term and an orthogonal residual.
For a fixed transformation, weight optimization can reduce the AGWC term but cannot change the orthogonal residual.
This decomposition therefore separates the part of the quantization error that can be addressed by weight compensation from the part that requires transformation design.
By further analyzing the orthogonal residual, we clarify how rotation and scaling act on different error structures and provide theoretical guidance for how existing weight optimization, rotation, and scaling techniques should be combined.

Our analysis further connects several design choices that are otherwise treated separately in existing quantization methods.
It explains the role of randomized rotation in controlling error associated with persistent activation outliers, and provides a principled basis for selecting among rotation patterns.
It also characterizes channel scaling through the balance between activation-side and weight-side quantities, relating second-moment-based scaling to \smoothquant-style scaling.
In this way, rotation, sign selection, and scaling can be understood as complementary operations guided by the structure of the residual error rather than as independent heuristics.

To examine the practical implications of these theoretical guidelines, we evaluate backpropagation-free configurations that follow the analysis, including signed online Hadamard rotation, calibration-based sign selection, and $L_2$ channel scaling.
Across eight Llama and Mistral models, these theoretically guided choices achieve performance competitive with gradient-trained \spinquant without backpropagation.
These results demonstrate that the proposed error-decomposition framework can provide practical guidance for configuring existing quantization techniques.

To summarize, our contributions are as follows:
\begin{itemize}
\item Using an exact decomposition of local weight-activation quantization error into an activation-guided weight compensation term and an orthogonal residual, we bound the residual through persistent channel-wise outlier and regular activation quantities, clarifying how transformation design complements weight compensation.

\item We analyze the orthogonal residual to provide theoretical guidance for randomized rotation, sign selection, and channel scaling, placing these operations within a common framework.

\item We derive a second-moment bound on the regular quantity and obtain an $L_2$ scaling rule from its Frobenius surrogate, relating it to \smoothquant-style $L_\infty$ scaling through a further relaxation.

\item We validate these design principles in backpropagation-free quantization configurations across multiple Llama and Mistral models, achieving performance competitive with gradient-trained \spinquant.
\end{itemize}

\section{Related Work}

\paragraph{Post-training weight-activation quantization.}
Post-training quantization (PTQ) reduces the memory and inference costs of large language models using only a small calibration set, without requiring full retraining~\citep{frantar2023gptq,lin2024awq,xiao2023smoothquant}.
Reconstruction-based methods such as GPTQ optimize quantized weights to preserve full-precision layer outputs~\citep{frantar2023gptq}, while activation-aware methods such as AWQ use activation statistics to identify and protect salient weight channels~\citep{lin2024awq}.
Compared with weight-only quantization, aggressive low-bit weight-activation quantization is more challenging because activation quantization introduces input-dependent error, and large activation values expand the quantization range and reduce the effective resolution available for ordinary values~\citep{dettmers2022llmint8,xiao2023smoothquant}.
In particular, persistent channel-wise outliers, namely channels that remain consistently large across tokens, can repeatedly dominate per-token activation quantization scales~\citep{xiao2023smoothquant, raman2025rethinking}.
Raman et al.~\citep{raman2025rethinking} further provide an explicit statistical criterion for detecting such channels and empirically analyze their formation through normalization and weight-matrix multiplication.
We therefore focus on persistent channel-wise outliers as a principal activation-side source of error in weight-activation quantization.

\paragraph{Output-preserving transformations.}
Outlier-aware quantization often applies transformations that preserve the full-precision output while modifying activations and weights before quantization.
\smoothquant transfers activation magnitudes to weights through output-preserving channel-wise scaling~\citep{xiao2023smoothquant}.
Rotation-based methods include \quip, which uses structured random orthogonal transformations to spread large weight entries and locally sensitive directions more evenly across coordinates before quantization~\citep{chee2023quip}, and \quipsharp, which efficiently implements a related preprocessing strategy using randomized Hadamard transforms~\citep{tseng2024quipsharp}.
\quarot applies Hadamard rotations throughout the Transformer~\citep{ashkboos2024quarot}, while \smoothrot combines Hadamard rotation with calibrated channel scaling~\citep{czako2026smoothrot}.
\dfrot empirically compares randomized orthogonal and Hadamard rotations under per-token activation quantization~\citep{xiang2025dfrot}.
\spinquant learns quantization-aware rotations through gradient-based calibration~\citep{liu2025spinquant}.
\omniquant~\citep{shao2024omniquant} and \flatquant~\citep{sun2025flatquant} learn more general equivalent transformations through gradient-based calibration.
Our work develops a common analytical basis for these transformations through local quantization-error decomposition.
The residual analysis explains how rotation and scaling complement weight compensation and guides their design without backpropagation.

\section{Problem Setting}
\label{sec:background_problem_setting}
We first define the local reconstruction problem for a linear layer under an output-preserving transformation.
We then introduce a simple decomposition of activations that separates persistent outlier structure from the remaining variation, which will be used to analyze rotation and scaling.

\subsection{Local Weight-Activation Quantization}
\label{sec:background_local_quantization}

Consider a linear layer $\Y=\X\W^\top$, where $\X\in\R^{T \times d}$ and $\W\in\R^{d'\times d}$.
For an invertible transformation $\Pmat\in\mathbb{R}^{d\times d}$, define $\Z=\X\Pmat$ and $\V=\W\Pmat^{-\top}$, so that $\X\W^\top=\Z\V^\top$.
Thus, $\Pmat$ changes the representation seen by the quantizers without changing the full-precision output.
We isolate activation quantization at the current layer, excluding upstream errors, to clarify the effects of rotation and scaling.

For activation quantization, let $b_A$ denote the quantized activation bit-width, and set $q_A = 2^{b_A-1}-1$, the maximum value after quantization.
For analytical tractability, we consider no-clipping dynamic per-token symmetric uniform quantization.
For each token $t$, define the dynamic scale by $\Delta_t = \| \Zentry{t}{:}\|_{\infty} / q_A$.
Then the quantized activation $\Ztilde=\quant_A(\Z)=\Z+\A$ is given element-wise by
\begin{align}
    \Ztildeentry{t}{j}
    =
    \quant_A(\Zentry{t}{j})
    =
    \Delta_t \operatorname{round} \left( \frac{\Zentry{t}{j}}{\Delta_t} \right),
    \qquad
    \Aentry{t}{j} = \Ztildeentry{t}{j} - \Zentry{t}{j}.
\end{align}
For weight quantization, we write $\Vtilde=\quant_W(\V)$.
Unlike the activation quantizer above, $\quant_W$ is not restricted to a fixed rounding rule.
It may itself be the result of an optimization procedure, such as GPTQ, or any other feasible weight quantization method.

We investigate the local reconstruction objective
\begin{align}
    \label{eq:local_quantization_error}
    \obj(\Pmat,\Vtilde)
    =
    \frac{1}{T}
    \left\|
    \Ztilde\Vtilde^{\top} - \Z\V^{\top}
    \right\|_F^2 .
\end{align}

This objective measures the output reconstruction error of a single linear layer due to quantization of its input activations and weights.

\subsection{Persistent Channel-Wise Outlier Decomposition}
\label{sec:background_co_decomposition}

Under the per-token activation quantizer introduced above, the quantization scale for token $t$ is determined by $\|\Zentry{t}{:}\|_\infty$, so a small number of consistently large activation channels can repeatedly dominate the quantization range and reduce the effective resolution available for the remaining coordinates.
Prior work has observed that activation outliers in LLMs often exhibit such a channel-wise persistent structure across tokens~\citep{xiao2023smoothquant,raman2025rethinking}.
Motivated by this observation, we represent each persistent outlier channel by a token-shared level and assign the remaining token-dependent variation to a regular component.
This decomposition serves as an analytical abstraction that separates a sparse persistent component, whose coordinates may coherently interfere after rotation, from the remaining activation variation, thereby enabling separate analyses of Hadamard rotation and channel-wise scaling.

To formalize this abstraction, let $\Cco(\X)\subseteq[d]$ be the set of persistent channel-wise outlier channels detected on calibration data.
For each CO channel $k\in\Cco(\X)$, let $L_k$ denote its persistent CO level.
We decompose the activation as $\X = \Xco + \Xreg$, where $\Xco$ is defined by
\begin{align}
    \label{eq:sparse_co_model}
    \Xcoentry{t}{k}
    =
    \begin{cases}
        L_k, & k\in\Cco(\X),\\
        0, & k\notin\Cco(\X),
    \end{cases}
    \qquad t\in[T],
\end{align}
and $\Xreg=\X-\Xco$.
We refer to this decomposition as the persistent sparse CO model.
Under this model, $\Xco$ captures the persistent activation component shared across tokens, while $\Xreg$ contains both the token-dependent deviations of the CO channels and the activations of the remaining channels.

We consider transformations of the form $\Pmat=\Lambdamat^{-1}\D\Hmat^\top$, where $\Lambdamat=\operatorname{diag}(\lambda_1,\ldots,\lambda_d)$ is a positive diagonal scaling matrix, with each $\lambda_k>0$ denoting the scaling coefficient for channel $k$, $\D$ is a diagonal sign matrix, and $\Hmat$ is the normalized Hadamard matrix.
The normalized Hadamard matrix is orthogonal, has entries of equal magnitude $1/\sqrt{d}$, and admits an $O(d\log d)$ fast transform; combining it with random diagonal signs gives the randomized Hadamard transform commonly used for coordinate flattening and incoherence processing ~\citep{ailon2006approx,tseng2024quipsharp}.
The diagonal scaling redistributes channel magnitudes between activations and weights, while the signed Hadamard transformation disperses activation coordinates across the transformed channel dimension.
Since the corresponding inverse transformation is absorbed into the weights, these operations preserve the full-precision linear-layer output.

\section{Error Decomposition and Residual Bounds for Quantization}
\label{sec:proposed_method}
We first derive an exact decomposition of the local objective in Section~\ref{sec:background_problem_setting} to separate the roles of weight compensation and transformation design.
We then use residual bounds to guide practical configurations: CO bounds motivate signed Hadamard rotation and sign sampling, while a second-moment surrogate of the regular bound yields an $L_2$ scaling rule.

\subsection{Local Quantization Error Decomposition and Residual Bound}
\label{sec:local_quantization_error_decomposition}
First, we exactly decompose the local quantization error and then bound the orthogonal residual to clarify the distinct roles of weight optimization, rotation, and scaling.
\begin{theorem}[Local quantization error decomposition and residual bound]
\label{thm:co_regular_error_decomposition}
Define $(\V^\star)^\top := \V^\top-\Ztilde^{\dagger}\A\V^\top$ and $\Pimat_{\Ztilde} := \Ztilde\Ztilde^\dagger$, where $\Pimat_{\Ztilde}$ is the orthogonal projector onto the column space of $\Ztilde$.
Then, the local quantization error $\obj(\Pmat,\Vtilde)$ in Eq.~\eqref{eq:local_quantization_error} admits the following exact decomposition and upper bound:
\begin{align}
    \obj(\Pmat,\Vtilde)
    \label{eq:local_error_projection_decomposition}
    &=
    \objagwc(\Pmat,\Vtilde)
    +
    \frac{1}{T}
    \left\|
    \left(
    \I
    -
    \Pimat_{\Ztilde}
    \right)
    \A\V^\top
    \right\|_F^2
    \\
    &\le
    \objagwc(\Pmat,\Vtilde)
    +
    \frac{d}{4q_A^2 T}
    \left(
    \sqrt{
    \left\|
    \W
    \Lambdamat
    \right\|_2^2
    \objco(\Lambdamat,\D)
    }
    +
    \sqrt{
    \objreg(\Lambdamat,\D)
    }
    \right)^2,
\end{align}
where
\begin{align}
    \label{eq:objagwc}
    \objagwc(\Pmat,\Vtilde)
    &=
    \frac{1}{T}
    \left\|
    \Ztilde
    \left(
    \Vtilde
    -
    \V^\star
    \right)^\top
    \right\|_F^2,
    \\
    \objco(\Lambdamat,\D)
    &=
    \sum_{t=1}^{T}
    \left\|
    \Xcoentry{t}{:}
    \Lambdamat^{-1}
    \D
    \Hmat^\top
    \right\|_\infty^2,
    \\
    \objreg(\Lambdamat,\D)
    &=
    \left\|
    \W
    \Lambdamat
    \right\|_2^2
    \sum_{t=1}^{T}
    \left\|
    \Xregentry{t}{:}
    \Lambdamat^{-1}
    \D
    \Hmat^\top
    \right\|_\infty^2.
\end{align}
\end{theorem}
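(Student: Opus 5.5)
Write the error as $E := \Ztilde\Vtilde^\top - \Z\V^\top$ and insert $\Z = \Ztilde - \A$ to get
\[
E = \Ztilde\Vtilde^\top - \Ztilde\V^\top + \A\V^\top .
\]
Next split $\A\V^\top$ as $\Pimat_{\Ztilde}\A\V^\top + (\I-\Pimat_{\Ztilde})\A\V^\top$. Since $\Pimat_{\Ztilde}\A\V^\top = \Ztilde\Ztilde^\dagger\A\V^\top$, the definition of $\V^\star$ gives
\[
\Ztilde\Vtilde^\top - \Ztilde\V^\top + \Pimat_{\Ztilde}\A\V^\top = \Ztilde\bigl(\Vtilde^\top - (\V^\star)^\top\bigr).
\]
The first part of $E$ therefore lies in the column space of $\Ztilde$. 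The residual $(\I-\Pimat_{\Ztilde})\A\V^\top$ lies in its orthogonal complement. Pythagoras in the Frobenius inner product, $\operatorname{tr}(M^\top \Pimat N)$ with $\Pimat(\I-\Pimat)=0$, then gives the exact decomposition after dividing by $T$.

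For the bound, first drop the projector: $\|(\I-\Pimat_{\Ztilde})\A\V^\top\|_F \le \|\A\V^\top\|_F$, because $\I-\Pimat_{\Ztilde}$ is an orthogonal projector and has operator norm at most $1$. Then work row by row, $\|\A\V^\top\|_F^2 = \sum_t \|\Aentry{t}{:}\V^\top\|_2^2$. Each row satisfies $\|\Aentry{t}{:}\V^\top\|_2 \le \|\V\|_2\,\|\Aentry{t}{:}\|_2$. Since the quantizer does not clip, every entry obeys $|\Aentry{t}{j}|\le \Delta_t/2$, so $\|\Aentry{t}{:}\|_2 \le \sqrt{d}\,\Delta_t/2 = \sqrt{d}\,\|\Zentry{t}{:}\|_\infty/(2q_A)$. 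This produces the prefactor $d/(4q_A^2)$.

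Now compute $\V = \W\Pmat^{-\top}$ with $\Pmat=\Lambdamat^{-1}\D\Hmat^\top$. Since $\D$ and $\Hmat$ are orthogonal and $\D$ and $\Lambdamat$ are diagonal, $\Pmat^{-1} = \Hmat\D\Lambdamat$ and $\V=\W\Lambdamat\D\Hmat^\top$, so $\|\V\|_2 = \|\W\Lambdamat\|_2$. Combining the pieces,
\[
\|\A\V^\top\|_F^2 \le \frac{d}{4q_A^2}\,\|\W\Lambdamat\|_2^2 \sum_t \|\Zentry{t}{:}\|_\infty^2 .
\]
Split the activation row as $\Zentry{t}{:} = \Xcoentry{t}{:}\Pmat + \Xregentry{t}{:}\Pmat$. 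The triangle inequality for $\|\cdot\|_\infty$ gives, for each $t$,
\[
\|\Zentry{t}{:}\|_\infty \le a_t + b_t,
\]
where $a_t = \|\Xcoentry{t}{:}\Lambdamat^{-1}\D\Hmat^\top\|_\infty$ and $b_t$ is the analogous regular quantity.

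The step needing most care is converting $\sum_t (a_t+b_t)^2$ into the squared sum of square roots in the statement. This is Minkowski's inequality in $\ell_2$ over tokens:
\[
\Bigl(\textstyle\sum_t (a_t+b_t)^2\Bigr)^{1/2} \le \Bigl(\textstyle\sum_t a_t^2\Bigr)^{1/2} + \Bigl(\textstyle\sum_t b_t^2\Bigr)^{1/2}.
\]
Multiply through by $\|\W\Lambdamat\|_2$. The first term becomes $\sqrt{\|\W\Lambdamat\|_2^2\,\objco}$, and the second becomes $\sqrt{\objreg}$, since $\objreg$ already contains the factor $\|\W\Lambdamat\|_2^2$. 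Squaring and dividing by $T$ yields the stated bound.

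One minor point needs checking: rows with $\Zentry{t}{:}=0$ have $\Delta_t=0$. The convention $\A=0$ for such rows is consistent with the bound.
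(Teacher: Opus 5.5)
Your proposal is correct and takes essentially the same route as the paper. Both arguments split the error via $\Ztilde = \Z + \A$ into a column-space part and an orthogonal-complement part, bound the residual using $|\Aentry{t}{j}|\le\Delta_t/2$ and $\|\V\|_2=\|\W\Lambdamat\|_2$, and finish with the $\ell_\infty$ triangle inequality followed by Minkowski over tokens. Your row-wise operator-norm step is just an unpacked form of the paper's $\|\A\V^\top\|_F\le\|\V\|_2\|\A\|_F$, and your remark on all-zero rows covers a degenerate case the paper leaves implicit.
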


The proof is provided in Appendix~\ref{app:proof_co_regular_error_decomposition}.

For full-column-rank $\Ztilde$, this projection identity matches CoreQ's mismatch decomposition~\citep{cha2026coreq}.
CoreQ uses this decomposition to limit overfitting to calibration data by adjusting how much the weights compensate for input errors.
We instead bound the transformation-dependent residual through persistent CO and regular activation quantities to clarify how rotation and scaling complement weight compensation.

We call $\objagwc$ in Eq.~\eqref{eq:objagwc} the \emph{activation-guided weight compensation} (AGWC) term: the activation discrepancy $\A$ guides the continuous target $\V^\star$, and $\objagwc$ measures the reconstruction error of the quantized weights relative to this target.
Geometrically, this term lies in the column space of $\widetilde{\Z}$ and can therefore be affected by changing the quantized weights, whereas the second term lies in the orthogonal complement of this space.
Thus, for a fixed transformation, weight optimization can reduce the AGWC term but cannot change the orthogonal residual.
The CO and regular quantities are not additional terms in the exact decomposition; they appear only in an upper bound on the orthogonal residual.
Theorem~\ref{thm:co_regular_error_decomposition} thus separates the scope of weight compensation from the residual error requiring transformation design.
This distinction guides how weight optimization, rotation, and scaling can be combined.

\subsection{Weight Compensation through the Lens of Error Decomposition}
\label{sec:weight_compensation_connections}

Theorem~\ref{thm:co_regular_error_decomposition} provides a common perspective on existing weight-reconstruction and compensation methods.
When $\A=\boldsymbol{0}$, $\V^\star=\V$ and $\objagwc$ reduces to the standard GPTQ-style reconstruction objective~\citep{frantar2023gptq}.
Thus, shared-input reconstruction can be viewed as the special case in which no activation discrepancy needs to be compensated.
When the input discrepancy is nonzero, however, shared-input reconstruction, even with quantized activations, does not explicitly compensate for this discrepancy.

QEP~\citep{arai2025qep} corrects weights before applying a quantizer such as GPTQ; with full compensation ($\alpha=1$), matching inputs and reference outputs, and an undamped invertible Hessian, its target equals $\V^\star$.
With GPTQ, the corrected weights are quantized sequentially, updating the remaining weights.
GPTAQ~\citep{li2025gptaq} incorporates input-error compensation into sequential updates derived from asymmetric reconstruction, using channel-wise residual components for efficiency.
For fixed inputs and $\Pmat$, Eq.~\eqref{eq:local_error_projection_decomposition} makes $\obj$ and $\objagwc$ equivalent up to a weight-independent constant over the same feasible set, without implying identical algorithmic outputs.
The orthogonal residual remains unchanged by weight compensation, so the CO and regular quantities in its bound guide complementary rotation and scaling.

In practice, we adopt backpropagation-free GPTAQ with activation quantization enabled before weight calibration to account for local and propagated input errors against full-precision reference outputs~\citep{li2025gptaq}.

\subsection{Randomized Hadamard Rotation for \texorpdfstring{$\objco$}{J-co}}
\label{sec:randomized_hadamard_rotation}

We now analyze the rotation-dependent CO quantity $\objco(\Lambdamat,\D)$ in the residual upper bound of Theorem~\ref{thm:co_regular_error_decomposition} and clarify how randomized Hadamard rotation controls this quantity.

\begin{theorem}[Hadamard rotation bounds]
\label{thm:fixed_randomized_sampled_hadamard}
Under the persistent sparse CO model defined in Eq.~\eqref{eq:sparse_co_model}, let
\begin{align}
    \Nco(\X)
    =
    \left|
    \Cco(\X)
    \right|,
    \qquad
    L_{\Lambda,\max}
    =
    \max_{k\in\Cco(\X)}
    \frac{|L_k|}{\lambda_k}.
\end{align}
Then, the following three bounds hold:

\paragraph{Fixed rotation.}
If $\D=\mathbf{I}$, then
\begin{align}
    \objco(\Lambdamat,\mathbf{I})
    \le
    \frac{
    T\Nco(\X)^2
    }{
    d
    }
    L_{\Lambda,\max}^2.
\end{align}

\paragraph{Randomized rotation.}
If $\D=\operatorname{diag}(\epsilon_1,\ldots,\epsilon_d)$ with independent Rademacher signs, then for any $\delta\in(0,1)$, with probability at least $1-\delta$,
\begin{align}
    \objco(\Lambdamat,\D)
    \le
    \frac{
    2T\Nco(\X)
    }{
    d
    }
    L_{\Lambda,\max}^2
    \left(
    \log(2d)
    +
    \log\frac{1}{\delta}
    \right).
\end{align}

\paragraph{Sampled rotation.}
Let $\mathcal{D}_{N_s}=\{\D^{(1)},\ldots,\D^{(N_s)}\}$ be a set of $N_s$ independently sampled random sign matrices.
Then, with probability at least $1-\delta$,
\begin{align}
    \min_{\D\in\mathcal{D}_{N_s}}
    \objco(\Lambdamat,\D)
    \le
    \frac{
    2T\Nco(\X)
    }{
    d
    }
    L_{\Lambda,\max}^2
    \left(
    \log(2d)
    +
    \frac{1}{N_s}
    \log\frac{1}{\delta}
    \right).
\end{align}
\end{theorem}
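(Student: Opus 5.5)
Since $\Xco$ has identical rows, $\objco(\Lambdamat,\D)=T\,\|u^\top\Lambdamat^{-1}\D\Hmat^\top\|_\infty^2$. Here $u\in\R^d$ is supported on $\Cco(\X)$ with $u_k=L_k$. Write $a_k := \epsilon_k L_k/\lambda_k$ for $k\in\Cco(\X)$, so that $|a_k|\le L_{\Lambda,\max}$. Because the normalized Hadamard matrix has entries $\Hentry{j}{k}=h_{jk}/\sqrt d$ with $h_{jk}\in\{\pm1\}$, the $j$-th transformed coordinate is
\begin{align*}
s_j := \frac{1}{\sqrt d}\sum_{k\in\Cco(\X)} h_{jk}\,a_k .
\end{align*}
The plan is to bound $\max_j s_j^2$ in each of the three regimes and then multiply by $T$.

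\textbf{Fixed rotation.} With $\D=\I$, apply the triangle inequality: $|s_j|\le d^{-1/2}\Nco(\X) L_{\Lambda,\max}$ for every $j$. Squaring and multiplying by $T$ gives $T\Nco(\X)^2L_{\Lambda,\max}^2/d$. This is the worst case, corresponding to constructive interference when all signs align.

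\textbf{Randomized rotation.} For each fixed $j$, the products $h_{jk}\epsilon_k$ are again independent Rademacher variables, so $s_j$ is a Rademacher sum with coefficients $c_k=h_{jk}L_k/(\lambda_k\sqrt d)$. These satisfy $\sum_k c_k^2\le \Nco(\X)L_{\Lambda,\max}^2/d=:\sigma^2$. Hoeffding's inequality gives $\Pr(|s_j|>\tau)\le 2\exp(-\tau^2/(2\sigma^2))$. A union bound over the $d$ coordinates gives
\begin{align*}
\Pr\!\left(\max_j |s_j|>\tau\right)\le 2d\,e^{-\tau^2/(2\sigma^2)} .
\end{align*}
Set the right-hand side equal to $\delta$, which gives $\tau^2=2\sigma^2(\log(2d)+\log(1/\delta))$. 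Multiplying by $T$ gives exactly the stated bound. The constant $2$ and the $\log(2d)$ term come from combining the two-sided Hoeffding tail with the union bound.

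\textbf{Sampled rotation.} For a single sign pattern, the previous step gives, for any $\tau$,
\begin{align*}
p(\tau):=\Pr\!\left(\objco > T\tau^2\right)\le 2d\,e^{-\tau^2/(2\sigma^2)} .
\end{align*}
Since the $N_s$ sign matrices are independent, the probability that every sample exceeds the level is at most $p(\tau)^{N_s}$. It therefore suffices to choose $\tau$ with $2d\,e^{-\tau^2/(2\sigma^2)}=\delta^{1/N_s}$, which gives $\tau^2=2\sigma^2(\log(2d)+N_s^{-1}\log(1/\delta))$. This is the stated bound. The argument is valid only when $\delta^{1/N_s}\le1$, so that the tail bound is used in its nontrivial range; this always holds since $\delta\in(0,1)$.

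The main point requiring care is the randomized case. One must check that row-wise sign multiplication by $\Hmat$ preserves independence of the Rademacher variables, and track the constants in Hoeffding plus the union bound so that they land exactly at $2(\log 2d+\log(1/\delta))$. The sampled case then follows mechanically from independence across samples.
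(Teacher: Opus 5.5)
Your proposal is correct and matches the paper's proof essentially step for step: the triangle inequality handles the fixed case; Hoeffding's inequality plus a union bound over the $d$ coordinates handles the randomized case, with the shared CO row giving the factor $T$ without any union over tokens; and the sampled case sets the product of independent tail probabilities equal to $\delta$. The only cosmetic difference is that you relax the variance proxy to $\Nco(\X)L_{\Lambda,\max}^2/d$ before applying Hoeffding, whereas the paper carries $R_\Lambda^2=\sum_{k\in\Cco(\X)}L_k^2/\lambda_k^2$ through and applies $R_\Lambda^2\le\Nco(\X)L_{\Lambda,\max}^2$ only at the end.
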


The proof is provided in Appendix~\ref{app:proof_fixed_randomized_sampled_hadamard}.

Theorem~\ref{thm:fixed_randomized_sampled_hadamard} identifies interference among persistent outlier channels as a mechanism governing the CO quantity in the residual bound.
Its fixed and randomized cases quantify this mechanism for existing Hadamard rotation schemes~\citep{tseng2024quipsharp,ashkboos2024quarot}; the sampled case bounds the best CO quantity in a candidate set and motivates calibration-based sign selection.
Because each entry of a normalized Hadamard matrix has magnitude $1/\sqrt{d}$, each CO channel contributes to every rotated coordinate at this scale; however, under a fixed rotation, contributions from multiple CO channels can constructively align, yielding the worst-case dependence $\Nco(\X)^2/d$.
Random signs convert these contributions into Rademacher sums and reduce this dependence to $\Nco(\X)\log d/d$ up to logarithmic factors, while sampling multiple sign matrices increases the probability that the candidate set contains a favorable interference pattern.
Appendix~\ref{app:interference_view} provides an interference-based interpretation of the $\order(\Nco(\X)^2)$ dependence under fixed rotation and its reduction to $\order(\Nco(\X))$ under random signs.
Since both the number of persistent CO channels $\Nco(\X)$ and their scaled maximum magnitude $L_{\Lambda,\max}$ depend on the model, layer, and activation distribution, these quantities provide one possible explanation for why different models exhibit different levels of accuracy degradation under low-bit quantization.

\subsection{Deriving \texorpdfstring{$L_2$}{L2} Scaling from a Bound on \texorpdfstring{$\objreg$}{J-reg}}
\label{sec:l2_scaling_regular_term}

We next consider the regular quantity $\objreg$ in the residual bound.
Unlike persistent COs, the regular component is not dominated by a small number of shared large coordinates.
After random-sign Hadamard rotation, its maximum coordinate can be controlled by its second-moment energy.

\begin{proposition}[$L_2$ scaling from second-moment balancing]
\label{prop:l2_scaling_regular_residual}
Let $\D$ be a random sign matrix with independent Rademacher signs.
For any $\delta\in(0,1)$, with probability at least $1-\delta$,
\begin{align}
    \objreg(\Lambdamat,\D)
    \le
    \frac{2\log(2dT/\delta)}{d}
    \left\|
        \W
        \Lambdamat
    \right\|_2^2
    \left\|
        \Xreg
        \Lambdamat^{-1}
    \right\|_F^2.
\end{align}
Consequently, the regular quantity $\objreg$ is controlled by $\left\|\W\Lambdamat\right\|_2^2\left\|\Xreg\Lambdamat^{-1}\right\|_F^2$.
Using the Frobenius surrogate $\mathcal{R}_2(\Lambdamat) = \left\| \W\Lambdamat \right\|_F^2 \left\| \Xreg\Lambdamat^{-1} \right\|_F^2$, we denote by $\lambda_k^{(2)}$ the channel-wise scaling coefficient obtained by minimizing $\mathcal{R}_2(\Lambdamat)$.
This gives
\begin{align}
    \lambda_k^{(2)}
    =
    \sqrt{
        \frac{
            \left\|
                \Xregentry{:}{k}
            \right\|_2
        }{
            \left\|
                \Wentry{:}{k}
            \right\|_2
        }
    }.
\end{align}
\end{proposition}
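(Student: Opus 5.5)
The plan has two separate parts: a high-probability bound on the regular quantity, and a closed-form minimiser of the Frobenius surrogate.

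\textbf{High-probability bound.} Write $\mathbf{r}_t := \Xregentry{t}{:}\Lambdamat^{-1}\in\R^{d}$. Since $\D=\diag(\epsilon_1,\ldots,\epsilon_d)$, the $j$-th coordinate of $\mathbf{r}_t\D\Hmat^\top$ is the Rademacher sum
\begin{align}
    \sum_{k=1}^{d}\epsilon_k\,(\mathbf{r}_t)_k\,\Hentry{j}{k}.
\end{align}
Each coefficient satisfies $|\Hentry{j}{k}|=1/\sqrt{d}$, so the sum of squared coefficients is exactly $\|\mathbf{r}_t\|_2^2/d$. Hoeffding's inequality for Rademacher sums then gives, for each fixed pair $(t,j)$,
\begin{align}
    \Pr\left(\left|\left(\mathbf{r}_t\D\Hmat^\top\right)_j\right|\ge s\right)\le 2\exp\left(-\frac{d\,s^2}{2\|\mathbf{r}_t\|_2^2}\right).
\end{align}
This is the same tool presumably used for the randomized case of Theorem~\ref{thm:fixed_randomized_sampled_hadamard}. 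Take $s_t^2=2\|\mathbf{r}_t\|_2^2\log(2dT/\delta)/d$, so each event has probability at most $\delta/(dT)$. A union bound over all $dT$ pairs $(t,j)$ then shows that, with probability at least $1-\delta$, simultaneously for every $t$,
\begin{align}
    \|\mathbf{r}_t\D\Hmat^\top\|_\infty^2\le \frac{2\log(2dT/\delta)}{d}\|\mathbf{r}_t\|_2^2 .
\end{align}
Summing over $t$ gives $\sum_t\|\mathbf{r}_t\|_2^2=\|\Xreg\Lambdamat^{-1}\|_F^2$. Multiplying by the deterministic factor $\|\W\Lambdamat\|_2^2$, which does not involve $\D$, yields the claimed bound. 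Two details need care: the union bound requires the thresholds $s_t$ to be token dependent, and the failure count $2\cdot dT\cdot\exp(\cdot)$ must come out to exactly $\delta$.

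\textbf{Minimising the surrogate.} The "consequently" sentence is immediate from the bound. For the scaling rule, use $\|\W\Lambdamat\|_2\le\|\W\Lambdamat\|_F$ to justify $\mathcal{R}_2$ as a surrogate. Since $\Lambdamat$ is diagonal, the surrogate factorises over channels:
\begin{align}
    \mathcal{R}_2(\Lambdamat)=\Big(\sum_k \lambda_k^2 w_k^2\Big)\Big(\sum_k \lambda_k^{-2}x_k^2\Big),\qquad w_k=\|\Wentry{:}{k}\|_2,\quad x_k=\|\Xregentry{:}{k}\|_2 .
\end{align}
By Cauchy--Schwarz applied to the vectors $(\lambda_k w_k)_k$ and $(x_k/\lambda_k)_k$,
\begin{align}
    \mathcal{R}_2(\Lambdamat)\ge\Big(\sum_k w_k x_k\Big)^2,
\end{align}
with equality iff $\lambda_k w_k\propto x_k/\lambda_k$, i.e.\ $\lambda_k^2\propto x_k/w_k$. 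The surrogate is invariant under a global rescaling $\Lambdamat\mapsto c\Lambdamat$, so the minimiser is determined only up to a constant, and we normalise it to $\lambda_k^{(2)}=\sqrt{x_k/w_k}$. Channels with $w_k=0$ or $x_k=0$ must be handled separately, assuming nonzero norms or taking limits.

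\textbf{Main obstacle.} I expect the only real subtlety to be stating the minimiser correctly as unique up to scale, and handling any degenerate channels. The concentration step is routine once the $1/\sqrt{d}$ magnitude of the Hadamard entries is used to compute the variance proxy.
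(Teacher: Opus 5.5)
Your proposal is correct and follows essentially the same route as the paper's proof. It applies Hoeffding to each rotated coordinate with variance proxy $\|\mathbf{r}_t\|_2^2/d$, takes a union bound over the $dT$ token--coordinate pairs, sums over tokens and multiplies by the $\D$-independent factor $\|\W\Lambdamat\|_2^2$, then uses Cauchy--Schwarz on the factorised Frobenius surrogate with equality at $\lambda_k^4\propto\|\Xregentry{:}{k}\|_2^2/\|\Wentry{:}{k}\|_2^2$, up to a common scalar; your notes on scale invariance and degenerate channels are minor additions the paper leaves implicit.
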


The proof is provided in Appendix~\ref{app:proof_l2_scaling_regular_residual}.

The resulting rule has the same $L_2$ balancing form as QuIP's rescaling~\citep{chee2023quip}.
Note that QuIP motivates this scaling through a heuristic surrogate for output error in weight-only quantization, whereas we derive it from a bound on the activation-quantization residual.
This derivation also enables a common interpretation of $L_2$ and \smoothquant-style $L_\infty$ scaling, as discussed in Section~\ref{sec:interpreting_smoothquant_linf_scaling}.

Proposition~\ref{prop:l2_scaling_regular_residual} explains why $L_2$ statistics are natural for the regular component.
The second-moment bound on $\objreg$ combines the activation-side energy
$\|\Xreg\Lambdamat^{-1}\|_F^2$ with the weight-side amplification factor
$\|\W\Lambdamat\|_2^2$.
In particular, the weight-side spectral norm
$\|\W\Lambdamat\|_2$ measures the worst-case amplification induced by the
scaled weight matrix and therefore reflects not only individual channel
magnitudes but also cross-channel interactions through the geometry of the
weight columns.
Directly optimizing the resulting objective can therefore account for such
interactions in principle.
However, the objective is not separable across channels, making it difficult
to obtain a simple closed-form channel-wise scaling rule.

To obtain a tractable channel-wise rule, we use the Frobenius surrogate
$\mathcal{R}_2(\Lambdamat)$.
This relaxation replaces the worst-case coupled amplification by the total second-moment energy of the scaled weight columns.
The surrogate is a product of sums of squared channel-wise quantities:
$\lambda_k\|\Wentry{:}{k}\|_2$ on the weight side and
$\|\Xregentry{:}{k}\|_2/\lambda_k$ on the activation side.
The equality condition in the Cauchy--Schwarz inequality yields the $L_2$ scaling rule in
Proposition~\ref{prop:l2_scaling_regular_residual}.
The proposition is stated in terms of the regular component $\Xreg$ to
characterize the error component targeted by the scaling rule.
In practice, since $\Xreg$ is not explicitly identified, we approximate its channel-wise second-moment statistics using the empirical full activations, as described in Section~\ref{sec:practical_implementation}.

\subsection{Comparing \texorpdfstring{$L_2$}{L2} Scaling with \smoothquant-Style \texorpdfstring{$L_\infty$}{L-inf}}
\label{sec:interpreting_smoothquant_linf_scaling}

Our residual analysis connects $L_2$ and \smoothquant-style $L_\infty$ scaling through successive relaxations of the same regular-term bound, providing a common interpretation not established by QuIP's heuristic derivation.

From Proposition~\ref{prop:l2_scaling_regular_residual}, after random-sign Hadamard rotation, the regular quantity satisfies $\objreg(\Lambdamat,\D) \lesssim \left\| \W \Lambdamat \right\|_2^2 \left\| \Xreg \Lambdamat^{-1} \right\|_F^2$.
Our $L_2$ scaling is obtained from a Frobenius surrogate of this second-moment bound.
In contrast, a \smoothquant-style interpretation can be obtained by further upper bounding the same quantity using $L_\infty$ channel statistics:
\begin{align}
    \left\|
    \W
    \Lambdamat
    \right\|_2^2
    \left\|
    \Xreg
    \Lambdamat^{-1}
    \right\|_F^2
    &\le
    \left\|
    \W
    \Lambdamat
    \right\|_F^2
    \left\|
    \Xreg
    \Lambdamat^{-1}
    \right\|_F^2
    \notag\\
    &\le
    d'T
    \left(
    \sum_{k=1}^{d}
    \lambda_k^2
    \left\|
    \Wentry{:}{k}
    \right\|_\infty^2
    \right)
    \left(
    \sum_{k=1}^{d}
    \frac{
    \left\|
    \Xregentry{:}{k}
    \right\|_\infty^2
    }{
    \lambda_k^2
    }
    \right).
\end{align}
This relaxation is looser because it first replaces the spectral norm of the scaled weight matrix by its Frobenius norm and then replaces channel-wise second-moment quantities by their maximum magnitudes.
We denote by $\lambda_k^{(\infty)}$ the channel-wise scaling coefficient
obtained by minimizing the resulting $L_\infty$ surrogate.
Then, up to a common positive scalar,
\begin{align}
    \lambda_k^{(\infty)}
    =
    \sqrt{
    \frac{
    \left\|
    \Xregentry{:}{k}
    \right\|_\infty
    }{
    \left\|
    \Wentry{:}{k}
    \right\|_\infty
    }
    }.
\end{align}
This rule is stated in terms of the regular component $\Xreg$ to
characterize the error component targeted by the scaling.
In practice, since $\Xreg$ is not explicitly identified, we approximate its
channel-wise maximum statistics using the empirical full activations,
recovering the familiar \smoothquant-style scaling rule.

Our framework thus explains both scaling rules through their retained statistics: the $L_2$ surrogate preserves channel-wise second moments, whereas the \smoothquant-style $L_\infty$ surrogate further relaxes them to channel maxima.
This provides a bound-based rationale for using $L_2$ statistics after randomized Hadamard rotation.

\subsection{From Theory to Practice: \proposal}
\label{sec:practical_implementation}

The preceding analysis leads to \proposal, a simple configuration with three components: add random signs to online Hadamard rotations, select sign patterns using calibration data, and apply $L_2$ channel scaling.
Theorem~\ref{thm:fixed_randomized_sampled_hadamard} motivates the first two, while Proposition~\ref{prop:l2_scaling_regular_residual} derives the scaling rule from a second-moment surrogate.
We implement these choices without backpropagation within the \quarot framework, following \smoothrot for scaling placement and fusion.

\paragraph{Signed online rotation (SOR).}
Theorem~\ref{thm:fixed_randomized_sampled_hadamard} shows that the sign structure of a Hadamard rotation is itself important: a fixed Hadamard transform can retain constructive interference among persistent CO channels, whereas random signs suppress this interference through Rademacher cancellation.
This suggests that the random-sign principle should be applied not only to offline rotations, but also to online Hadamard rotations that directly act on activations before quantization.
In the original \quarot, random signs are used only for offline rotation, whereas the online Hadamard rotations are fixed.
Motivated by this distinction, we insert sign flips before the online Hadamard rotations at the attention output and FFN down projections.
We also apply the same orthogonal signed rotation $\Rmat=\D\Hmat^\top$ to queries and keys within each head after RoPE, before key quantization, preserving unquantized attention scores since $(\Q\Rmat)(\Kmat\Rmat)^\top=\Q\Kmat^\top$.

\paragraph{Sign sampling (SS).}
The sampled bound in Theorem~\ref{thm:fixed_randomized_sampled_hadamard} motivates searching multiple sign patterns by controlling the best CO quantity in the candidate set.
Together with observed rotation variability~\citep{liu2025spinquant}, this motivates calibration-based selection without backpropagation.
We jointly select sign patterns for offline, attention-output, FFN-down, and QK rotations, generated from a single seed per candidate.
Each category shares its sign vector across layers, and QK additionally shares its vector across all query and key heads.
We screen candidates with RTN and select among the GPTAQ-reevaluated finalists by validation perplexity (Section~\ref{sec:experiments}).

\paragraph{$L_2$ scaling (L2S).}
We use the scaling rule derived by minimizing the second-moment surrogate in Proposition~\ref{prop:l2_scaling_regular_residual}, with empirical activation statistics as a proxy for the regular component.
Following \smoothrot, we apply scaling only to the FFN down-projection input.
For the down-projection input $\Xdown$ and weight $\Wdown$, we use the transformation $\Xdown\Wdown^\top = (\Xdown\Lambdamat^{-1}) (\Wdown\Lambdamat)^\top$.
Since the regular component is not explicitly separated in practice, we use the empirical full-activation norm as a readily computable proxy:
\begin{align}
    \lambda_k^{(2)}
    =
        \sqrt{
        \frac{
        \left\|
        \Xdownentry{:}{k}
        \right\|_2
    }{
        \left\|
        \Wdownentry{:}{k}
        \right\|_2
    }
    }.
\end{align}
As in \smoothrot, the scaling factors can be merged into the corresponding up- and down-projection weights and therefore introduce no additional inference-time overhead.

Overall, the decomposition in Theorem~\ref{thm:co_regular_error_decomposition} and the bounds in Theorem~\ref{thm:fixed_randomized_sampled_hadamard} and Proposition~\ref{prop:l2_scaling_regular_residual} provide a unified rationale for combining weight compensation with rotation and scaling.
Their effects are coupled, since changing the transformation can alter both terms of the exact decomposition.

\begin{table}[t]
\caption{
\label{tb:evaluation} W4A4 results with GPTAQ weight quantization and KV4 across model families and scales.
We report WikiText-2 perplexity (PPL) and average zero-shot accuracy over six tasks.
}
\centering
\renewcommand{\arraystretch}{1.15}
\setlength{\tabcolsep}{3.5pt}
\resizebox{\textwidth}{!}{
\begin{tabular}{c|c c|c c|c c|c c|c c} \hline
\multirow{3}{*}{Model}
& \multicolumn{2}{c|}{FP reference}
& \multicolumn{8}{c}{W4A4} \\ \cline{2-11}
& \multicolumn{2}{c|}{W16A16}
& \multicolumn{2}{c|}{\quarot}
& \multicolumn{2}{c|}{\smoothrot}
& \multicolumn{2}{c|}{\spinquant}
& \multicolumn{2}{c}{\proposal} \\
& PPL $\downarrow$ & Avg. $\uparrow$
& PPL $\downarrow$ & Avg. $\uparrow$
& PPL $\downarrow$ & Avg. $\uparrow$
& PPL $\downarrow$ & Avg. $\uparrow$
& PPL $\downarrow$ & Avg. $\uparrow$ \\ \hline
Llama-7B         & 5.67 & 69.39 & 6.12  & 66.17 & 6.13  & 66.16 & 6.19  & 66.18          & \textbf{6.09}  & \textbf{66.67} \\
Llama-13B        & 5.05 & 71.78 & 5.42  & 69.49 & 5.42  & 69.09 & 5.44  & 69.31          & \textbf{5.37}  & \textbf{69.91} \\ \hdashline
Llama2-7B        & 5.47 & 69.82 & 5.96  & 66.32 & 5.98  & 66.45 & 6.06  & 65.98          & \textbf{5.94}  & \textbf{66.59} \\
Llama2-13B       & 4.86 & 72.57 & 5.24  & 69.87 & 5.26  & 70.15 & 5.26  & 70.23          & \textbf{5.23}  & \textbf{70.50} \\ \hdashline
Llama3-8B        & 5.94 & 73.30 & 7.45  & 65.71 & 7.57  & 65.85 & 7.33  & \textbf{68.36} & \textbf{7.20}  & 68.10          \\
Llama3.2-1B      & 9.41 & 59.80 & 13.81 & 49.58 & 14.13 & 48.63 & 13.33 & 50.37          & \textbf{12.92} & \textbf{50.71} \\
Llama3.2-3B      & 7.55 & 68.20 & 9.27  & 61.12 & 9.35  & 59.91 & 9.19  & \textbf{62.03} & \textbf{9.08}  & 60.94          \\ \hdashline
Mistral-7B-v0.3  & 5.35 & 73.73 & 5.73  & 70.55 & 5.77  & 69.80 & 5.74  & \textbf{71.17} & \textbf{5.71}  & 71.00          \\ \hline
\end{tabular}
}
\end{table}

\section{Experiments}
\label{sec:experiments}

\paragraph{Tasks and baselines.}
We evaluate Llama-7B/13B~\citep{touvron2023llama}, Llama2-7B/13B~\citep{touvron2023llama2}, Llama3-8B~\citep{grattafiori2024llama3}, Llama3.2-1B/3B~\citep{meta2024llama32}, and Mistral-7B-v0.3~\citep{jiang2023mistral,mistralai2024mistral7bv03} under W4A4 quantization.
We evaluate perplexity (PPL) on $64$ samples each from WikiText-2 (WT2) test and a C4 validation subset.
For zero-shot evaluation, we report accuracy on PIQA~\citep{bisk2020piqa}, ARC-Easy (ARC-E) and ARC-Challenge (ARC-C)~\citep{clark2018arc}, HellaSwag (HS)~\citep{zellers2019hellaswag}, WinoGrande (WG)~\citep{sakaguchi2021winogrande}, and LAMBADA (LB)~\citep{paperno2016lambada}.
We compare \proposal with \quarot, \smoothrot, and \spinquant~\citep{liu2025spinquant}, applying the same GPTAQ weight-quantization settings to the transformed models produced by all four methods.

\paragraph{Quantization and calibration.}
We simulate W4A4 decoder linear layers with KV4: weights use per-output-channel symmetric quantization, activations per-token symmetric quantization, and K/V per-token, per-head asymmetric quantization.
Activation and K/V clipping ratios are $0.9$ and $0.95$.
GPTAQ uses $128$ WikiText-2 train samples.
Calibration includes local activation and propagated weight, activation, and K/V errors against corresponding unquantized-path outputs.
Scaling statistics use the untransformed, unquantized model with $512$ WikiText-2 train samples.
Following the rotation-training protocol of \spinquant~\citep{liu2025spinquant}, we train rotations on WikiText-2 train for $100$ steps with frozen 16-bit weights and 4-bit activations and K/V, then quantize the rotated weights with GPTAQ.

\paragraph{Rotation selection.}
For \proposal, we screen $10$ candidates with RTN, reevaluate the top $3$ with GPTAQ, and select the lowest PPL on WikiText-2 validation.

\paragraph{Main results.}
Table~\ref{tb:evaluation} summarizes the main W4A4 results across eight models from two families and from 1B to 13B parameters; Appendix~\ref{app:full_results} reports C4 perplexity and per-task accuracies.
\proposal achieves the lowest WikiText-2 perplexity on all eight models and the highest average zero-shot accuracy on five.
It also achieves the lowest C4 perplexity on every model, indicating that the perplexity improvements are not limited to WikiText-2, which is used for calibration and rotation selection.
On Llama3-8B and Mistral-7B-v0.3, its average accuracy is within $0.26$ and $0.17$ percentage points of \spinquant, respectively; on Llama3.2-3B, it improves perplexity over \spinquant while trailing its average accuracy by $1.09$ points.
Overall, these results demonstrate the practical value of the transformation design guided by our analysis, with consistent perplexity gains and competitive downstream accuracy across model families and scales without backpropagation.

\begin{figure}[t]
    \centering
    \begin{minipage}[t]{0.46\linewidth}
        \vspace{0pt}
        \centering
        \includegraphics[
            width=\linewidth
        ]{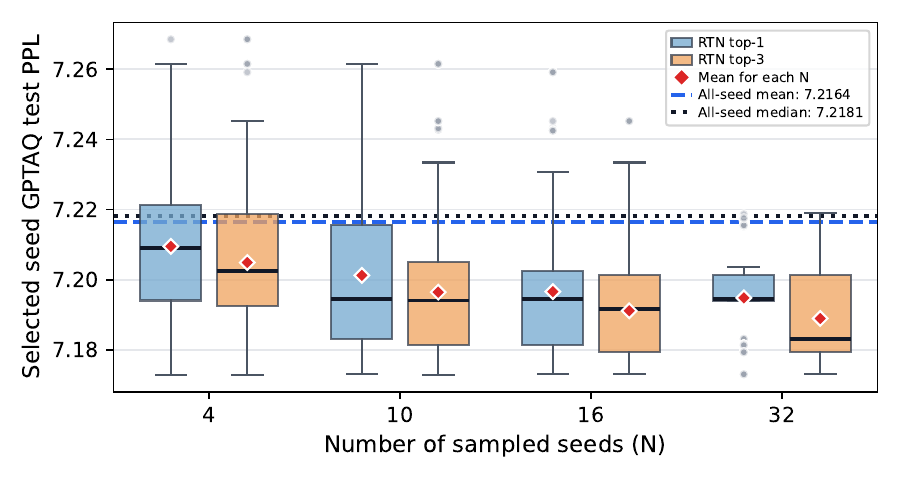}
        \captionof{figure}{
            RTN-based seed-selection ablation on Llama3-8B with GPTAQ.
        }
        \label{fig:sampling_stability}
    \end{minipage}
    \hfill
    \begin{minipage}[t]{0.51\linewidth}
        \vspace{0pt}
        \centering
        \renewcommand{\arraystretch}{1.15}
        \captionof{table}{
    Component ablation on Llama3-8B with GPTAQ (W4A4KV4).
    Parentheses indicate the residual-bound term motivating each component.
}
\label{tb:ablation_gptaq}
\resizebox{\linewidth}{!}{
    \begin{tabular}{c c c c|c} \hline
    \multicolumn{4}{c|}{Method}                                  & Llama3-8B        \\ \cline{1-4}
    \quarot    & SOR ($\objco$) & SS ($\objco$) & L2S ($\objreg$)& PPL $\downarrow$ \\ \hline
    \checkmark &                &               &                & 7.45             \\ \cdashline{1-5}[2pt/1pt]
    \checkmark & \checkmark     &               &                & 7.30             \\
    \checkmark &                & \checkmark    &                & 7.40             \\ 
    \checkmark &                &               & \checkmark     & 7.32             \\ \cdashline{1-5}[2pt/1pt]
    \checkmark & \checkmark     & \checkmark    & \checkmark     & \textbf{7.20}             \\ \hline
    \end{tabular}
}

    \end{minipage}
\end{figure}

\paragraph{Component ablation.}
Table~\ref{tb:ablation_gptaq} ablates the proposed components on Llama3-8B under W4A4 quantization with GPTAQ.
Starting from \quarot ($7.45$), adding SOR (including QK rotation), SS, or L2S individually reduces WikiText-2 perplexity to $7.30$, $7.40$, and $7.32$, respectively, with SOR providing the largest individual gain.
Combining all components further reduces perplexity to $7.20$.
These gains are consistent with their complementary motivations: SOR and SS primarily target $\objco$, whereas L2S targets $\objreg$, although their effects on quantization error need not be independent.

\paragraph{Seed-selection ablation.}
On Llama3-8B with W4A4, KV4, and GPTAQ, we precompute RTN validation and GPTAQ validation/test perplexities on WikiText-2 for $100$ random rotation seeds.
For each $N\in\{4,10,16,32\}$, we repeat $1000$ trials, sampling $N$ seeds without replacement and applying both selection rules to the same subset.
Top-$1$ selects the RTN validation winner; top-$3$ reranks the three best RTN candidates by GPTAQ validation perplexity.
Figure~\ref{fig:sampling_stability} shows that both rules improve mean GPTAQ test perplexity over uniform selection (the all-seed mean) and benefit from larger candidate sets.
Top-$3$ consistently outperforms top-$1$, whose gains diminish between $N=16$ and $N=32$.
Thus, RTN screening offers a low-cost selection criterion: top-$1$ needs one GPTAQ run, while top-$3$ trades three runs for better selection.

\section{Conclusion}

In this paper, we analyzed local weight-activation quantization error through an exact decomposition into an activation-guided weight compensation term and an orthogonal residual.
Bounding the residual through persistent channel-wise outlier and regular activation quantities clarifies how rotation and scaling complement weight optimization.
Within this framework, our bounds characterize outlier interference under fixed, randomized, and sampled Hadamard rotations, explaining the role of random signs in suppressing constructive interference and motivating calibration-based sign selection.
For scaling, minimizing a second-moment surrogate of the regular-term bound yields an $L_2$ rule, while a further relaxation recovers a \smoothquant-style $L_\infty$ rule.
We evaluated these design principles through backpropagation-free configurations combining signed online rotation, sign sampling, and $L_2$ channel scaling.
Experiments across eight Llama and Mistral models demonstrated performance competitive with gradient-trained \spinquant without backpropagation.
Our analysis provides a principled basis for designing output-preserving transformations for low-bit weight-activation quantization.

\section*{AI Use Statement}
Generative AI tools were used to assist with writing and polishing the manuscript.
We reviewed and revised all AI-assisted text.

\bibliography{references}

@inproceedings{xiao2023smoothquant,
    title = 	 {{S}mooth{Q}uant: Accurate and Efficient Post-Training Quantization for Large Language Models},
    author =       {Xiao, Guangxuan and Lin, Ji and Seznec, Mickael and Wu, Hao and Demouth, Julien and Han, Song},
    booktitle = 	 {Proceedings of the 40th International Conference on Machine Learning},
    pages = 	 {38087--38099},
    year = 	 {2023},
    editor = 	 {Krause, Andreas and Brunskill, Emma and Cho, Kyunghyun and Engelhardt, Barbara and Sabato, Sivan and Scarlett, Jonathan},
    volume = 	 {202},
    series = 	 {Proceedings of Machine Learning Research},
    month = 	 {23--29 Jul},
    publisher =    {PMLR},
    url = 	 {https://proceedings.mlr.press/v202/xiao23c.html}
}

@inproceedings{dettmers2022llmint8,
 author = {Dettmers, Tim and Lewis, Mike and Belkada, Younes and Zettlemoyer, Luke},
 booktitle = {Advances in Neural Information Processing Systems},
 editor = {S. Koyejo and S. Mohamed and A. Agarwal and D. Belgrave and K. Cho and A. Oh},
 pages = {30318--30332},
 publisher = {Curran Associates, Inc.},
 title = {GPT3.int8(): 8-bit Matrix Multiplication for Transformers at Scale},
 url = {https://proceedings.neurips.cc/paper_files/paper/2022/file/c3ba4962c05c49636d4c6206a97e9c8a-Paper-Conference.pdf},
 volume = {35},
 year = {2022}
}

@misc{raman2025rethinking,
    title={Rethinking the Outlier Distribution in Large Language Models: An In-depth Study}, 
    author={Rahul Raman and Khushi Sharma and Sai Qian Zhang},
    year={2025},
    eprint={2505.21670},
    archivePrefix={arXiv},
    primaryClass={cs.CL},
    url={https://arxiv.org/abs/2505.21670}, 
}

@inproceedings{ashkboos2024quarot,
 author = {Ashkboos, Saleh and Mohtashami, Amirkeivan and Croci, Maximilian L. and Li, Bo and Cameron, Pashmina and Jaggi, Martin and Alistarh, Dan and Hoefler, Torsten and Hensman, James},
 booktitle = {Advances in Neural Information Processing Systems},
 doi = {10.52202/079017-3180},
 editor = {A. Globerson and L. Mackey and D. Belgrave and A. Fan and U. Paquet and J. Tomczak and C. Zhang},
 pages = {100213--100240},
 publisher = {Curran Associates, Inc.},
 title = {QuaRot: Outlier-Free 4-Bit Inference in Rotated LLMs},
 url = {https://proceedings.neurips.cc/paper_files/paper/2024/file/b5b939436789f76f08b9d0da5e81af7c-Paper-Conference.pdf},
 volume = {37},
 year = {2024}
}

@inproceedings{tseng2024quipsharp,
    title = 	 {{Q}u{IP}\#: Even Better {LLM} Quantization with Hadamard Incoherence and Lattice Codebooks},
    author =       {Tseng, Albert and Chee, Jerry and Sun, Qingyao and Kuleshov, Volodymyr and De Sa, Christopher},
    booktitle = 	 {Proceedings of the 41st International Conference on Machine Learning},
    pages = 	 {48630--48656},
    year = 	 {2024},
    editor = 	 {Salakhutdinov, Ruslan and Kolter, Zico and Heller, Katherine and Weller, Adrian and Oliver, Nuria and Scarlett, Jonathan and Berkenkamp, Felix},
    volume = 	 {235},
    series = 	 {Proceedings of Machine Learning Research},
    month = 	 {21--27 Jul},
    publisher =    {PMLR},
    url = 	 {https://proceedings.mlr.press/v235/tseng24a.html}
}

@misc{cha2026coreq,
    title = {{CoreQ}: Learning-Free Mismatch Correction and Successive Rounding for Quantization},
    author = {Cha, Seohyeon and Chen, Huancheng and Kim, Dongjun and Zhang, Haoran and Chan, Kevin and de Veciana, Gustavo and Vikalo, Haris},
    year = {2026},
    eprint = {2602.05902},
    archivePrefix = {arXiv},
    primaryClass = {cs.LG},
    url = {https://arxiv.org/abs/2602.05902v2},
    note = {Version 2}
}

@inproceedings{chee2023quip,
    author = {Chee, Jerry and Cai, Yaohui and Kuleshov, Volodymyr and De Sa, Christopher},
    booktitle = {Advances in Neural Information Processing Systems},
    doi = {10.52202/075280-0196},
    editor = {A. Oh and T. Naumann and A. Globerson and K. Saenko and M. Hardt and S. Levine},
    pages = {4396--4429},
    publisher = {Curran Associates, Inc.},
    title = {QuIP: 2-Bit Quantization of Large Language Models With Guarantees},
    url = {https://proceedings.neurips.cc/paper_files/paper/2023/file/0df38cd13520747e1e64e5b123a78ef8-Paper-Conference.pdf},
    volume = {36},
    year = {2023}
}

@inproceedings{shao2024omniquant,
    author = {Shao, Wenqi and Chen, Mengzhao and Zhang, Zhaoyang and Xu, Peng and Zhao, Lirui and Li, Zhiqian and Zhang, Kaipeng and Peng, Gao and Qiao, Yu and Luo, Ping},
    booktitle = {International Conference on Learning Representations},
    editor = {B. Kim and Y. Yue and S. Chaudhuri and K. Fragkiadaki and M. Khan and Y. Sun},
    pages = {45472--45496},
    title = {OmniQuant: Omnidirectionally Calibrated Quantization for Large Language Models},
    url = {https://proceedings.iclr.cc/paper_files/paper/2024/file/c6483c8a68083af3383f91ee0dc6db95-Paper-Conference.pdf},
    volume = {2024},
    year = {2024}
}

@inproceedings{liu2025spinquant,
    author = {Liu, Zechun and Zhao, Changsheng and Fedorov, Igor and Soran, Bilge and Choudhary, Dhruv and Krishnamoorthi, Raghuraman and Chandra, Vikas and Tian, Yuandong and Blankevoort, Tijmen},
    booktitle = {International Conference on Learning Representations},
    editor = {Y. Yue and A. Garg and N. Peng and F. Sha and R. Yu},
    pages = {92009--92032},
    title = {SpinQuant: LLM Quantization with Learned Rotations},
    url = {https://proceedings.iclr.cc/paper_files/paper/2025/file/e5b1c0d4866f72393c522c8a00eed4eb-Paper-Conference.pdf},
    volume = {2025},
    year = {2025}
}

@inproceedings{sun2025flatquant,
    title = 	 {{F}lat{Q}uant: Flatness Matters for {LLM} Quantization},
    author =       {Sun, Yuxuan and Liu, Ruikang and Bai, Haoli and Bao, Han and Zhao, Kang and Li, Yuening and Hu, Jiaxin and Yu, Xianzhi and Hou, Lu and Yuan, Chun and Jiang, Xin and Liu, Wulong and Yao, Jun},
    booktitle = 	 {Proceedings of the 42nd International Conference on Machine Learning},
    pages = 	 {57587--57613},
    year = 	 {2025},
    editor = 	 {Singh, Aarti and Fazel, Maryam and Hsu, Daniel and Lacoste-Julien, Simon and Berkenkamp, Felix and Maharaj, Tegan and Wagstaff, Kiri and Zhu, Jerry},
    volume = 	 {267},
    series = 	 {Proceedings of Machine Learning Research},
    month = 	 {13--19 Jul},
    publisher =    {PMLR},
    url = 	 {https://proceedings.mlr.press/v267/sun25l.html}
}

@misc{frantar2023gptq,
    title={GPTQ: Accurate Post-Training Quantization for Generative Pre-trained Transformers}, 
    author={Elias Frantar and Saleh Ashkboos and Torsten Hoefler and Dan Alistarh},
    year={2023},
    eprint={2210.17323},
    archivePrefix={arXiv},
    primaryClass={cs.LG},
    url={https://arxiv.org/abs/2210.17323}, 
}

@inproceedings{lin2024awq,
    author = {Lin, Ji and Tang, Jiaming and Tang, Haotian and Yang, Shang and Chen, Wei-Ming and Wang, Wei-Chen and Xiao, Guangxuan and Dang, Xingyu and Gan, Chuang and Han, Song},
    booktitle = {Proceedings of Machine Learning and Systems},
    editor = {P. Gibbons and G. Pekhimenko and C. De Sa},
    pages = {87--100},
    title = {AWQ: Activation-aware Weight Quantization for On-Device LLM Compression and Acceleration},
    url = {https://proceedings.mlsys.org/paper_files/paper/2024/file/42a452cbafa9dd64e9ba4aa95cc1ef21-Paper-Conference.pdf},
    volume = {6},
    year = {2024}
}

@inproceedings{czako2026smoothrot,
    author={Czakó, Patrik and Kertész, Gábor and Szénási, Sándor},
    booktitle={2025 IEEE International Conference on Systems, Man, and Cybernetics (SMC)}, 
    title={SmoothRot: Combining Channel-Wise Scaling and Rotation for Quantization-Friendly LLMs}, 
    year={2025},
    volume={},
    number={},
    pages={6461-6466},
    doi={10.1109/SMC58881.2025.11342731}
}

@inproceedings{bisk2020piqa,
    title={Piqa: Reasoning about physical commonsense in natural language},
    author={Bisk, Yonatan and Zellers, Rowan and Gao, Jianfeng and Choi, Yejin and others},
    booktitle={Proceedings of the AAAI conference on artificial intelligence},
    volume={34},
    number={05},
    pages={7432--7439},
    year={2020}
}

@misc{clark2018arc,
      title={Think you have Solved Question Answering? Try ARC, the AI2 Reasoning Challenge}, 
      author={Peter Clark and Isaac Cowhey and Oren Etzioni and Tushar Khot and Ashish Sabharwal and Carissa Schoenick and Oyvind Tafjord},
      year={2018},
      eprint={1803.05457},
      archivePrefix={arXiv},
      primaryClass={cs.AI},
      url={https://arxiv.org/abs/1803.05457}, 
}

@inproceedings{zellers2019hellaswag,
    title = "{H}ella{S}wag: Can a Machine Really Finish Your Sentence?",
    author = "Zellers, Rowan  and
      Holtzman, Ari  and
      Bisk, Yonatan  and
      Farhadi, Ali  and
      Choi, Yejin",
    editor = "Korhonen, Anna  and
      Traum, David  and
      M{\`a}rquez, Llu{\'i}s",
    booktitle = "Proceedings of the 57th Annual Meeting of the Association for Computational Linguistics",
    month = jul,
    year = "2019",
    address = "Florence, Italy",
    publisher = "Association for Computational Linguistics",
    url = "https://aclanthology.org/P19-1472/",
    doi = "10.18653/v1/P19-1472",
    pages = "4791--4800"
}

@article{sakaguchi2021winogrande,
    author = {Sakaguchi, Keisuke and Bras, Ronan Le and Bhagavatula, Chandra and Choi, Yejin},
    title = {WinoGrande: an adversarial winograd schema challenge at scale},
    year = {2021},
    issue_date = {September 2021},
    publisher = {Association for Computing Machinery},
    address = {New York, NY, USA},
    volume = {64},
    number = {9},
    issn = {0001-0782},
    url = {https://doi.org/10.1145/3474381},
    doi = {10.1145/3474381},
    journal = {Commun. ACM},
    month = aug,
    pages = {99–106},
    numpages = {8}
}

@inproceedings{paperno2016lambada,
    title = "The {LAMBADA} dataset: Word prediction requiring a broad discourse context",
    author = "Paperno, Denis  and
      Kruszewski, Germ{\'a}n  and
      Lazaridou, Angeliki  and
      Pham, Ngoc Quan  and
      Bernardi, Raffaella  and
      Pezzelle, Sandro  and
      Baroni, Marco  and
      Boleda, Gemma  and
      Fern{\'a}ndez, Raquel",
    editor = "Erk, Katrin  and
      Smith, Noah A.",
    booktitle = "Proceedings of the 54th Annual Meeting of the Association for Computational Linguistics (Volume 1: Long Papers)",
    month = aug,
    year = "2016",
    address = "Berlin, Germany",
    publisher = "Association for Computational Linguistics",
    url = "https://aclanthology.org/P16-1144/",
    doi = "10.18653/v1/P16-1144",
    pages = "1525--1534"
}

@misc{grattafiori2024llama3,
      title={The Llama 3 Herd of Models}, 
      author={Aaron Grattafiori and Abhimanyu Dubey and Abhinav Jauhri and Abhinav Pandey and Abhishek Kadian and Ahmad Al-Dahle and Aiesha Letman and Akhil Mathur and Alan Schelten and Alex Vaughan and Amy Yang and Angela Fan and Anirudh Goyal and Anthony Hartshorn and Aobo Yang and Archi Mitra and Archie Sravankumar and Artem Korenev and Arthur Hinsvark and Arun Rao and Aston Zhang and Aurelien Rodriguez and Austen Gregerson and Ava Spataru and Baptiste Roziere and Bethany Biron and Binh Tang and Bobbie Chern and Charlotte Caucheteux and Chaya Nayak and Chloe Bi and Chris Marra and Chris McConnell and Christian Keller and Christophe Touret and Chunyang Wu and Corinne Wong and Cristian Canton Ferrer and Cyrus Nikolaidis and Damien Allonsius and Daniel Song and Danielle Pintz and Danny Livshits and Danny Wyatt and David Esiobu and Dhruv Choudhary and Dhruv Mahajan and Diego Garcia-Olano and Diego Perino and Dieuwke Hupkes and Egor Lakomkin and Ehab AlBadawy and Elina Lobanova and Emily Dinan and Eric Michael Smith and Filip Radenovic and Francisco Guzmán and Frank Zhang and Gabriel Synnaeve and Gabrielle Lee and Georgia Lewis Anderson and Govind Thattai and Graeme Nail and Gregoire Mialon and Guan Pang and Guillem Cucurell and Hailey Nguyen and Hannah Korevaar and Hu Xu and Hugo Touvron and Iliyan Zarov and Imanol Arrieta Ibarra and Isabel Kloumann and Ishan Misra and Ivan Evtimov and Jack Zhang and Jade Copet and Jaewon Lee and Jan Geffert and Jana Vranes and Jason Park and Jay Mahadeokar and Jeet Shah and Jelmer van der Linde and Jennifer Billock and Jenny Hong and Jenya Lee and Jeremy Fu and Jianfeng Chi and Jianyu Huang and Jiawen Liu and Jie Wang and Jiecao Yu and Joanna Bitton and Joe Spisak and Jongsoo Park and Joseph Rocca and Joshua Johnstun and Joshua Saxe and Junteng Jia and Kalyan Vasuden Alwala and Karthik Prasad and Kartikeya Upasani and Kate Plawiak and Ke Li and Kenneth Heafield and Kevin Stone and Khalid El-Arini and Krithika Iyer and Kshitiz Malik and Kuenley Chiu and Kunal Bhalla and Kushal Lakhotia and Lauren Rantala-Yeary and Laurens van der Maaten and Lawrence Chen and Liang Tan and Liz Jenkins and Louis Martin and Lovish Madaan and Lubo Malo and Lukas Blecher and Lukas Landzaat and Luke de Oliveira and Madeline Muzzi and Mahesh Pasupuleti and Mannat Singh and Manohar Paluri and Marcin Kardas and Maria Tsimpoukelli and Mathew Oldham and Mathieu Rita and Maya Pavlova and Melanie Kambadur and Mike Lewis and Min Si and Mitesh Kumar Singh and Mona Hassan and Naman Goyal and Narjes Torabi and Nikolay Bashlykov and Nikolay Bogoychev and Niladri Chatterji and Ning Zhang and Olivier Duchenne and Onur Çelebi and Patrick Alrassy and Pengchuan Zhang and Pengwei Li and Petar Vasic and Peter Weng and Prajjwal Bhargava and Pratik Dubal and Praveen Krishnan and Punit Singh Koura and Puxin Xu and Qing He and Qingxiao Dong and Ragavan Srinivasan and Raj Ganapathy and Ramon Calderer and Ricardo Silveira Cabral and Robert Stojnic and Roberta Raileanu and Rohan Maheswari and Rohit Girdhar and Rohit Patel and Romain Sauvestre and Ronnie Polidoro and Roshan Sumbaly and Ross Taylor and Ruan Silva and Rui Hou and Rui Wang and Saghar Hosseini and Sahana Chennabasappa and Sanjay Singh and Sean Bell and Seohyun Sonia Kim and Sergey Edunov and Shaoliang Nie and Sharan Narang and Sharath Raparthy and Sheng Shen and Shengye Wan and Shruti Bhosale and Shun Zhang and Simon Vandenhende and Soumya Batra and Spencer Whitman and Sten Sootla and Stephane Collot and Suchin Gururangan and Sydney Borodinsky and Tamar Herman and Tara Fowler and Tarek Sheasha and Thomas Georgiou and Thomas Scialom and Tobias Speckbacher and Todor Mihaylov and Tong Xiao and Ujjwal Karn and Vedanuj Goswami and Vibhor Gupta and Vignesh Ramanathan and Viktor Kerkez and Vincent Gonguet and Virginie Do and Vish Vogeti and Vítor Albiero and Vladan Petrovic and Weiwei Chu and Wenhan Xiong and Wenyin Fu and Whitney Meers and Xavier Martinet and Xiaodong Wang and Xiaofang Wang and Xiaoqing Ellen Tan and Xide Xia and Xinfeng Xie and Xuchao Jia and Xuewei Wang and Yaelle Goldschlag and Yashesh Gaur and Yasmine Babaei and Yi Wen and Yiwen Song and Yuchen Zhang and Yue Li and Yuning Mao and Zacharie Delpierre Coudert and Zheng Yan and Zhengxing Chen and Zoe Papakipos and Aaditya Singh and Aayushi Srivastava and Abha Jain and Adam Kelsey and Adam Shajnfeld and Adithya Gangidi and Adolfo Victoria and Ahuva Goldstand and Ajay Menon and Ajay Sharma and Alex Boesenberg and Alexei Baevski and Allie Feinstein and Amanda Kallet and Amit Sangani and Amos Teo and Anam Yunus and Andrei Lupu and Andres Alvarado and Andrew Caples and Andrew Gu and Andrew Ho and Andrew Poulton and Andrew Ryan and Ankit Ramchandani and Annie Dong and Annie Franco and Anuj Goyal and Aparajita Saraf and Arkabandhu Chowdhury and Ashley Gabriel and Ashwin Bharambe and Assaf Eisenman and Azadeh Yazdan and Beau James and Ben Maurer and Benjamin Leonhardi and Bernie Huang and Beth Loyd and Beto De Paola and Bhargavi Paranjape and Bing Liu and Bo Wu and Boyu Ni and Braden Hancock and Bram Wasti and Brandon Spence and Brani Stojkovic and Brian Gamido and Britt Montalvo and Carl Parker and Carly Burton and Catalina Mejia and Ce Liu and Changhan Wang and Changkyu Kim and Chao Zhou and Chester Hu and Ching-Hsiang Chu and Chris Cai and Chris Tindal and Christoph Feichtenhofer and Cynthia Gao and Damon Civin and Dana Beaty and Daniel Kreymer and Daniel Li and David Adkins and David Xu and Davide Testuggine and Delia David and Devi Parikh and Diana Liskovich and Didem Foss and Dingkang Wang and Duc Le and Dustin Holland and Edward Dowling and Eissa Jamil and Elaine Montgomery and Eleonora Presani and Emily Hahn and Emily Wood and Eric-Tuan Le and Erik Brinkman and Esteban Arcaute and Evan Dunbar and Evan Smothers and Fei Sun and Felix Kreuk and Feng Tian and Filippos Kokkinos and Firat Ozgenel and Francesco Caggioni and Frank Kanayet and Frank Seide and Gabriela Medina Florez and Gabriella Schwarz and Gada Badeer and Georgia Swee and Gil Halpern and Grant Herman and Grigory Sizov and Guangyi and Zhang and Guna Lakshminarayanan and Hakan Inan and Hamid Shojanazeri and Han Zou and Hannah Wang and Hanwen Zha and Haroun Habeeb and Harrison Rudolph and Helen Suk and Henry Aspegren and Hunter Goldman and Hongyuan Zhan and Ibrahim Damlaj and Igor Molybog and Igor Tufanov and Ilias Leontiadis and Irina-Elena Veliche and Itai Gat and Jake Weissman and James Geboski and James Kohli and Janice Lam and Japhet Asher and Jean-Baptiste Gaya and Jeff Marcus and Jeff Tang and Jennifer Chan and Jenny Zhen and Jeremy Reizenstein and Jeremy Teboul and Jessica Zhong and Jian Jin and Jingyi Yang and Joe Cummings and Jon Carvill and Jon Shepard and Jonathan McPhie and Jonathan Torres and Josh Ginsburg and Junjie Wang and Kai Wu and Kam Hou U and Karan Saxena and Kartikay Khandelwal and Katayoun Zand and Kathy Matosich and Kaushik Veeraraghavan and Kelly Michelena and Keqian Li and Kiran Jagadeesh and Kun Huang and Kunal Chawla and Kyle Huang and Lailin Chen and Lakshya Garg and Lavender A and Leandro Silva and Lee Bell and Lei Zhang and Liangpeng Guo and Licheng Yu and Liron Moshkovich and Luca Wehrstedt and Madian Khabsa and Manav Avalani and Manish Bhatt and Martynas Mankus and Matan Hasson and Matthew Lennie and Matthias Reso and Maxim Groshev and Maxim Naumov and Maya Lathi and Meghan Keneally and Miao Liu and Michael L. Seltzer and Michal Valko and Michelle Restrepo and Mihir Patel and Mik Vyatskov and Mikayel Samvelyan and Mike Clark and Mike Macey and Mike Wang and Miquel Jubert Hermoso and Mo Metanat and Mohammad Rastegari and Munish Bansal and Nandhini Santhanam and Natascha Parks and Natasha White and Navyata Bawa and Nayan Singhal and Nick Egebo and Nicolas Usunier and Nikhil Mehta and Nikolay Pavlovich Laptev and Ning Dong and Norman Cheng and Oleg Chernoguz and Olivia Hart and Omkar Salpekar and Ozlem Kalinli and Parkin Kent and Parth Parekh and Paul Saab and Pavan Balaji and Pedro Rittner and Philip Bontrager and Pierre Roux and Piotr Dollar and Polina Zvyagina and Prashant Ratanchandani and Pritish Yuvraj and Qian Liang and Rachad Alao and Rachel Rodriguez and Rafi Ayub and Raghotham Murthy and Raghu Nayani and Rahul Mitra and Rangaprabhu Parthasarathy and Raymond Li and Rebekkah Hogan and Robin Battey and Rocky Wang and Russ Howes and Ruty Rinott and Sachin Mehta and Sachin Siby and Sai Jayesh Bondu and Samyak Datta and Sara Chugh and Sara Hunt and Sargun Dhillon and Sasha Sidorov and Satadru Pan and Saurabh Mahajan and Saurabh Verma and Seiji Yamamoto and Sharadh Ramaswamy and Shaun Lindsay and Shaun Lindsay and Sheng Feng and Shenghao Lin and Shengxin Cindy Zha and Shishir Patil and Shiva Shankar and Shuqiang Zhang and Shuqiang Zhang and Sinong Wang and Sneha Agarwal and Soji Sajuyigbe and Soumith Chintala and Stephanie Max and Stephen Chen and Steve Kehoe and Steve Satterfield and Sudarshan Govindaprasad and Sumit Gupta and Summer Deng and Sungmin Cho and Sunny Virk and Suraj Subramanian and Sy Choudhury and Sydney Goldman and Tal Remez and Tamar Glaser and Tamara Best and Thilo Koehler and Thomas Robinson and Tianhe Li and Tianjun Zhang and Tim Matthews and Timothy Chou and Tzook Shaked and Varun Vontimitta and Victoria Ajayi and Victoria Montanez and Vijai Mohan and Vinay Satish Kumar and Vishal Mangla and Vlad Ionescu and Vlad Poenaru and Vlad Tiberiu Mihailescu and Vladimir Ivanov and Wei Li and Wenchen Wang and Wenwen Jiang and Wes Bouaziz and Will Constable and Xiaocheng Tang and Xiaojian Wu and Xiaolan Wang and Xilun Wu and Xinbo Gao and Yaniv Kleinman and Yanjun Chen and Ye Hu and Ye Jia and Ye Qi and Yenda Li and Yilin Zhang and Ying Zhang and Yossi Adi and Youngjin Nam and Yu and Wang and Yu Zhao and Yuchen Hao and Yundi Qian and Yunlu Li and Yuzi He and Zach Rait and Zachary DeVito and Zef Rosnbrick and Zhaoduo Wen and Zhenyu Yang and Zhiwei Zhao and Zhiyu Ma},
      year={2024},
      eprint={2407.21783},
      archivePrefix={arXiv},
      primaryClass={cs.AI},
      url={https://arxiv.org/abs/2407.21783}, 
}

@misc{touvron2023llama,
    title={LLaMA: Open and Efficient Foundation Language Models}, 
    author={Hugo Touvron and Thibaut Lavril and Gautier Izacard and Xavier Martinet and Marie-Anne Lachaux and Timothée Lacroix and Baptiste Rozière and Naman Goyal and Eric Hambro and Faisal Azhar and Aurelien Rodriguez and Armand Joulin and Edouard Grave and Guillaume Lample},
    year={2023},
    eprint={2302.13971},
    archivePrefix={arXiv},
    primaryClass={cs.CL},
    url={https://arxiv.org/abs/2302.13971}, 
}

@misc{touvron2023llama2,
      title={Llama 2: Open Foundation and Fine-Tuned Chat Models}, 
      author={Hugo Touvron and Louis Martin and Kevin Stone and Peter Albert and Amjad Almahairi and Yasmine Babaei and Nikolay Bashlykov and Soumya Batra and Prajjwal Bhargava and Shruti Bhosale and Dan Bikel and Lukas Blecher and Cristian Canton Ferrer and Moya Chen and Guillem Cucurull and David Esiobu and Jude Fernandes and Jeremy Fu and Wenyin Fu and Brian Fuller and Cynthia Gao and Vedanuj Goswami and Naman Goyal and Anthony Hartshorn and Saghar Hosseini and Rui Hou and Hakan Inan and Marcin Kardas and Viktor Kerkez and Madian Khabsa and Isabel Kloumann and Artem Korenev and Punit Singh Koura and Marie-Anne Lachaux and Thibaut Lavril and Jenya Lee and Diana Liskovich and Yinghai Lu and Yuning Mao and Xavier Martinet and Todor Mihaylov and Pushkar Mishra and Igor Molybog and Yixin Nie and Andrew Poulton and Jeremy Reizenstein and Rashi Rungta and Kalyan Saladi and Alan Schelten and Ruan Silva and Eric Michael Smith and Ranjan Subramanian and Xiaoqing Ellen Tan and Binh Tang and Ross Taylor and Adina Williams and Jian Xiang Kuan and Puxin Xu and Zheng Yan and Iliyan Zarov and Yuchen Zhang and Angela Fan and Melanie Kambadur and Sharan Narang and Aurelien Rodriguez and Robert Stojnic and Sergey Edunov and Thomas Scialom},
      year={2023},
      eprint={2307.09288},
      archivePrefix={arXiv},
      primaryClass={cs.CL},
      url={https://arxiv.org/abs/2307.09288}, 
}

@article{hoeffding1963prob,
     ISSN = {01621459, 1537274X},
     URL = {http://www.jstor.org/stable/2282952},
     author = {Wassily Hoeffding},
     journal = {Journal of the American Statistical Association},
     number = {301},
     pages = {13--30},
     publisher = {[American Statistical Association, Taylor & Francis, Ltd.]},
     title = {Probability Inequalities for Sums of Bounded Random Variables},
     urldate = {2026-07-27},
     volume = {58},
     year = {1963}
}

@inproceedings{ailon2006approx,
    author = {Ailon, Nir and Chazelle, Bernard},
    title = {Approximate nearest neighbors and the fast Johnson-Lindenstrauss transform},
    year = {2006},
    isbn = {1595931341},
    publisher = {Association for Computing Machinery},
    address = {New York, NY, USA},
    url = {https://doi.org/10.1145/1132516.1132597},
    doi = {10.1145/1132516.1132597},
    booktitle = {Proceedings of the Thirty-Eighth Annual ACM Symposium on Theory of Computing},
    pages = {557–563},
    numpages = {7},
    location = {Seattle, WA, USA},
    series = {STOC '06}
}

@inproceedings{
    xiang2025dfrot,
    title={{DFR}ot: Achieving Outlier-Free and Massive Activation-Free for Rotated {LLM}s with Refined Rotation},
    author={Jingyang Xiang and Sai Qian Zhang},
    booktitle={Second Conference on Language Modeling},
    year={2025},
    url={https://openreview.net/forum?id=WzGypILLDb}
}

@inproceedings{arai2025qep,
    title = {Quantization Error Propagation: Revisiting Layer-Wise Post-Training Quantization},
    author = {Arai, Yamato and Ichikawa, Yuma},
    booktitle = {Advances in Neural Information Processing Systems},
    volume = {38},
    year = {2025},
    url = {https://proceedings.neurips.cc/paper_files/paper/2025/hash/df2034a516cbd617a96492cc476276c9-Abstract-Conference.html}
}

@inproceedings{li2025gptaq,
    title = {{GPTAQ}: Efficient Finetuning-Free Quantization for Asymmetric Calibration},
    author = {Li, Yuhang and Yin, Ruokai and Lee, Donghyun and Xiao, Shiting and Panda, Priyadarshini},
    booktitle = {Proceedings of the 42nd International Conference on Machine Learning},
    pages = {36690--36706},
    year = {2025},
    volume = {267},
    series = {Proceedings of Machine Learning Research},
    publisher = {PMLR},
    url = {https://proceedings.mlr.press/v267/li25dn.html}
}

@article{jiang2023mistral,
    title = {Mistral 7B},
    author = {Jiang, Albert Q. and Sablayrolles, Alexandre and Mensch, Arthur and Bamford, Chris and Chaplot, Devendra Singh and de las Casas, Diego and Bressand, Florian and Lengyel, Gianna and Lample, Guillaume and Saulnier, Lucile and Lavaud, L{\'e}lio Renard and Lachaux, Marie-Anne and Stock, Pierre and Le Scao, Teven and Lavril, Thibaut and Wang, Thomas and Lacroix, Timoth{\'e}e and El Sayed, William},
    journal = {arXiv preprint arXiv:2310.06825},
    year = {2023},
    url = {https://arxiv.org/abs/2310.06825}
}

@misc{mistralai2024mistral7bv03,
    author = {{Mistral AI}},
    title = {Mistral 7B v0.3},
    year = {2024},
    month = may,
    howpublished = {\url{https://docs.mistral.ai/models/mistral-7b-0-3}},
    note = {Model documentation}
}

@misc{meta2024llama32,
    author = {{Meta AI}},
    title = {Llama 3.2: Revolutionizing Edge AI and Vision with Open, Customizable Models},
    year = {2024},
    howpublished = {\url{https://ai.meta.com/blog/llama-3-2-connect-2024-vision-edge-mobile-devices/}}
}
\bibliographystyle{iclr2027_conference}

\appendix
\section{An Interference View of Hadamard Rotation Bounds}
\label{app:interference_view}

In this section, we provide additional analysis supporting the randomized Hadamard rotation arguments in Section~\ref{sec:randomized_hadamard_rotation}.
We make explicit how contributions from multiple persistent CO channels can constructively interfere under a fixed Hadamard rotation and explain how random signs remove this deterministic alignment in expectation.
This interpretation complements the high-probability bounds in Theorem~\ref{thm:fixed_randomized_sampled_hadamard}.

Let $\Hmat\in\mathbb{R}^{d\times d}$ be the normalized Hadamard matrix.
For $d=2^n$, identifying the indices $j,k$ with binary vectors in $\{0,1\}^n$, we write
\begin{align}
    \Hentry{j}{k}
    =
    \frac{1}{\sqrt d}
    (-1)^{\langle j,k\rangle},
\end{align}
where $\langle j,k\rangle$ denotes the bit-wise inner product modulo $2$.

To simplify the notation, we omit channel scaling; the same argument applies by replacing $L_k$ with $L_k/\lambda_k$.
For the fixed Hadamard rotation $\Xcohad=\Xco\Hmat^\top$, each output coordinate is
\begin{align}
    \Xcohadentry{t}{j}
    =
    \frac{1}{\sqrt d}
    \sum_{k\in\Cco(\X)}
    L_k
    (-1)^{\langle j,k\rangle}.
\end{align}
Thus, although each CO channel is spread across all output coordinates, the rotated coordinate is a deterministic signed sum of the CO magnitudes.

Its squared magnitude is
\begin{align}
    E_j
    &:=
    \left|
    \Xcohadentry{t}{j}
    \right|^2
    \\
    &=
    \underbrace{
    \frac{1}{d}
    \sum_{k\in\Cco(\X)}
    L_k^2
    }_{\mathrm{individual\text{-}channel\ energy}}
    +
    \underbrace{
    \frac{1}{d}
    \sum_{\substack{k,m\in\Cco(\X)\\k\neq m}}
    L_kL_m
    (-1)^{\langle j,k\oplus m\rangle}
    }_{\mathrm{cross\text{-}channel\ interference}},
\end{align}
where $\oplus$ denotes bit-wise XOR.
The first term is independent of $j$, whereas the second term depends on the fixed Hadamard sign pattern.
When several CO contributions have the same sign at an output coordinate, the cross terms are positive and the contributions add constructively.

The individual-channel term contains $\Nco(\X)$ contributions and therefore scales linearly with $\Nco(\X)$. 
In contrast, the cross-channel term contains $\Nco(\X)(\Nco(\X)-1)$ ordered pairs.
Thus, when many of these pairwise terms align positively, the cross-channel interference can grow on the order of $\Nco(\X)^2$.
This provides an interference-based interpretation of the quadratic dependence on the number of CO channels in the fixed-rotation bound of Theorem~\ref{thm:fixed_randomized_sampled_hadamard}.

Now consider a random sign matrix
$\D=\diag(\varepsilon_1,\dots,\varepsilon_d)$,
where the $\varepsilon_k$ are independent Rademacher variables.
The corresponding coordinate energy is
\begin{align}
    E_j(\D)
    =
    \frac{1}{d}
    \left(
        \sum_{k\in\Cco(\X)}
        \varepsilon_k
        L_k
        (-1)^{\langle j,k\rangle}
    \right)^2.
\end{align}
Expanding the square gives
\begin{align}
    E_j(\D)
    =
    \frac{1}{d}
    \sum_{k\in\Cco(\X)}
    L_k^2
    +
    \frac{1}{d}
    \sum_{\substack{k,m\in\Cco(\X)\\k\neq m}}
    \varepsilon_k\varepsilon_m
    L_kL_m
    (-1)^{\langle j,k\oplus m\rangle}.
\end{align}
Since
$
\mathbb{E}_{\D}
    \left[
        \varepsilon_k\varepsilon_m
    \right]
    =
    0
    \,
    (k\neq m),
$
all cross-channel interference terms vanish in expectation:
\begin{align}
    \mathbb{E}_{\D}
    \left[
        E_j(\D)
    \right]
    =
    \frac{1}{d}
    \sum_{k\in\Cco(\X)}
    L_k^2.
\end{align}
Hence, after randomization, only the individual-channel energy remains in expectation, reducing the dependence on the number of CO channels from $\order(\Nco(\X)^2)$ to $\order(\Nco(\X))$.
This provides an interference-based interpretation of the improvement from the fixed to the randomized bound in Theorem~\ref{thm:fixed_randomized_sampled_hadamard}.

Sampling multiple sign matrices can further be interpreted as increasing the chance of obtaining a favorable sign assignment in which the CO contributions cancel more effectively, thereby reducing the largest rotated coordinate.
\section{Proofs for Section~\ref{sec:proposed_method}}
\label{app:proofs_proposed_method}

\subsection{Proof of Theorem~\ref{thm:co_regular_error_decomposition}}
\label{app:proof_co_regular_error_decomposition}

\begin{proof}
We first prove the projection-based identity.
By definition, $\Ztilde=\Z+\A$. Therefore,
\begin{align}
    \Ztilde\Vtilde^\top
    -
    \Z\V^\top
    &=
    \Ztilde\Vtilde^\top
    -
    \Ztilde\V^\top
    +
    \Ztilde\V^\top
    -
    \Z\V^\top \\
    &=
    \Ztilde
    \left(
    \Vtilde
    -
    \V
    \right)^\top
    +
    \left(
    \Ztilde
    -
    \Z
    \right)
    \V^\top \\
    &=
    \Ztilde
    \left(
    \Vtilde
    -
    \V
    \right)^\top
    +
    \A\V^\top .
\end{align}
Let $\Pimat_{\Ztilde}:=\Ztilde\Ztilde^\dagger$ denote the orthogonal projector onto the column space of $\Ztilde$, where $\Ztilde^\dagger$ is the Moore--Penrose pseudoinverse.
We decompose $\A\V^\top$ into its projection onto this column space and its orthogonal complement:
\begin{align}
    \A\V^\top
    =
    \Ztilde
    \Ztilde^\dagger
    \A\V^\top
    +
    \left(
    \I
    -
    \Pimat_{\Ztilde}
    \right)
    \A\V^\top .
\end{align}
Thus,
\begin{align}
    \Ztilde\Vtilde^\top
    -
    \Z\V^\top
    &=
    \Ztilde
    \left(
    \left(
    \Vtilde
    -
    \V
    \right)^\top
    +
    \Ztilde^\dagger
    \A\V^\top
    \right)
    +
    \left(
    \I
    -
    \Pimat_{\Ztilde}
    \right)
    \A\V^\top .
\end{align}
By the definition $(\V^\star)^\top=\V^\top-\Ztilde^\dagger\A\V^\top$, this becomes
\begin{align}
    \Ztilde\Vtilde^\top
    -
    \Z\V^\top
    =
    \Ztilde
    \left(
    \Vtilde
    -
    \V^\star
    \right)^\top
    +
    \left(
    \I
    -
    \Pimat_{\Ztilde}
    \right)
    \A\V^\top .
\end{align}
The first term belongs to the column space of $\Ztilde$, while the second term belongs to its orthogonal complement.
Hence, by orthogonality,
\begin{align}
    \obj(\Pmat,\Vtilde)
    =
    \frac{1}{T}
    \left\|
    \Ztilde
    \left(
    \Vtilde
    -
    \V^\star
    \right)^\top
    \right\|_F^2
    +
    \frac{1}{T}
    \left\|
    \left(
    \I
    -
    \Pimat_{\Ztilde}
    \right)
    \A\V^\top
    \right\|_F^2 .
\end{align}

The first squared norm is the activation-guided weight compensation term $\objagwc$ defined in Eq.~\eqref{eq:objagwc}.
The second is independent of $\Vtilde$ for fixed inputs and $\Pmat$, and is attained as the minimum local error when unrestricted weights are set to $\V^\star$.

We next upper bound this orthogonal residual using the CO and regular quantities; they are not separate terms in the exact identity.
For no-clipping dynamic per-token symmetric quantization, let
\begin{align}
    \Delta_t
    =
    \frac{
    \left\|
    \Zentry{t}{:}
    \right\|_\infty
    }{
    q_A
    },
    \qquad
    q_A
    =
    2^{b_A-1}-1.
\end{align}
For each entry,
\begin{align}
    \left|
    \Aentry{t}{j}
    \right|
    =
    \left|
    \quant_A(\Zentry{t}{j})
    -
    \Zentry{t}{j}
    \right|
    \le
    \frac{\Delta_t}{2}.
\end{align}
Therefore,
\begin{align}
    \left\|
    \A
    \right\|_F^2
    \le
    \sum_{t=1}^{T}
    \sum_{j=1}^{d}
    \frac{\Delta_t^2}{4}
    =
    \frac{d}{4q_A^2}
    \sum_{t=1}^{T}
    \left\|
    \Zentry{t}{:}
    \right\|_\infty^2 .
\end{align}
Since orthogonal projection does not increase the Frobenius norm,
\begin{align}
    \frac{1}{T}
    \left\|
    \left(
    \I
    -
    \Pimat_{\Ztilde}
    \right)
    \A\V^\top
    \right\|_F^2
    &\le
    \frac{1}{T}
    \left\|
    \A\V^\top
    \right\|_F^2 \\
    &\le
    \frac{
    \left\|
    \V
    \right\|_2^2
    }{
    T
    }
    \left\|
    \A
    \right\|_F^2 .
\end{align}
For $\Pmat=\Lambdamat^{-1}\D\Hmat^\top$, we have
$\V=\W\Pmat^{-\top}=\W\Lambdamat\D\Hmat^\top$.
Because $\D$ and $\Hmat^\top$ are orthogonal matrices,
$\|\V\|_2=\|\W\Lambdamat\|_2$.
Thus,
\begin{align}
    \frac{1}{T}
    \left\|
    \left(
    \I
    -
    \Pimat_{\Ztilde}
    \right)
    \A\V^\top
    \right\|_F^2
    \le
    \frac{
    d
    \left\|
    \W
    \Lambdamat
    \right\|_2^2
    }{
    4q_A^2 T
    }
    \sum_{t=1}^{T}
    \left\|
    \Xentry{t}{:}
    \Lambdamat^{-1}
    \D
    \Hmat^\top
    \right\|_\infty^2 .
\end{align}

Finally, decompose $\Xentry{t}{:}=\Xcoentry{t}{:}+\Xregentry{t}{:}$.
By the triangle inequality in $\ell_\infty$ and then in $\ell_2$ over tokens,
\begin{align}
    \left(
    \sum_{t=1}^{T}
    \left\|
    \Xentry{t}{:}
    \Lambdamat^{-1}
    \D
    \Hmat^\top
    \right\|_\infty^2
    \right)^{1/2}
    &\le
    \left(
    \sum_{t=1}^{T}
    \left\|
    \Xcoentry{t}{:}
    \Lambdamat^{-1}
    \D
    \Hmat^\top
    \right\|_\infty^2
    \right)^{1/2} \\
    &\quad+
    \left(
    \sum_{t=1}^{T}
    \left\|
    \Xregentry{t}{:}
    \Lambdamat^{-1}
    \D
    \Hmat^\top
    \right\|_\infty^2
    \right)^{1/2}.
\end{align}
Therefore,
\begin{align}
    \sum_{t=1}^{T}
    \left\|
    \Xentry{t}{:}
    \Lambdamat^{-1}
    \D
    \Hmat^\top
    \right\|_\infty^2
    \le
    \left(
    \sqrt{
    \objco(\Lambdamat,\D)
    }
    +
    \sqrt{
    \frac{
    \objreg(\Lambdamat,\D)
    }{
    \left\|
    \W
    \Lambdamat
    \right\|_2^2
    }
    }
    \right)^2 .
\end{align}
Combining the above inequalities gives
\begin{align}
    \obj(\Pmat,\Vtilde)
    \le
    \objagwc(\Pmat,\Vtilde)
    +
    \frac{d}{4q_A^2 T}
    \left(
    \sqrt{
    \left\|
    \W
    \Lambdamat
    \right\|_2^2
    \objco(\Lambdamat,\D)
    }
    +
    \sqrt{
    \objreg(\Lambdamat,\D)
    }
    \right)^2 .
\end{align}
\end{proof}

\subsection{Proof of Theorem~\ref{thm:fixed_randomized_sampled_hadamard}}
\label{app:proof_fixed_randomized_sampled_hadamard}

\begin{proof}
Let the normalized Hadamard matrix be written as
\begin{align}
    \entry{\Hmat^\top}{k}{j}
    =
    \frac{h_{k,j}}{\sqrt d},
    \qquad
    h_{k,j}\in\{-1,+1\}.
\end{align}

\paragraph{Fixed rotation.}
Under the persistent sparse CO model,
\begin{align}
    \left(
        \Xcoentry{t}{:}
        \Lambdamat^{-1}
        \Hmat^\top
    \right)_j
    =
    \frac{1}{\sqrt d}
    \sum_{k\in\Cco(\X)}
    \frac{L_k}{\lambda_k}
    h_{k,j}.
\end{align}
Therefore,
\begin{align}
    \left|
        \left(
            \Xcoentry{t}{:}
            \Lambdamat^{-1}
            \Hmat^\top
        \right)_j
    \right|
    \le
    \frac{1}{\sqrt d}
    \sum_{k\in\Cco(\X)}
    \frac{|L_k|}{\lambda_k}
    \le
    \frac{\Nco(\X)}{\sqrt d}
    L_{\Lambda,\max}.
\end{align}
Taking the maximum over $j$ and summing over $T$ tokens gives
\begin{align}
    \objco(\Lambdamat,\mathbf{I})
    \le
    \frac{T\Nco(\X)^2}{d}
    L_{\Lambda,\max}^2.
\end{align}

\paragraph{Randomized rotation.}
Consider a random sign matrix $\D=\operatorname{diag}(\varepsilon_1,\ldots,\varepsilon_d)$, where $\varepsilon_k$ are independent Rademacher variables.
For a fixed coordinate $j$,
\begin{align}
    Y_j
    =
    \left(
        \Xcoentry{t}{:}
        \Lambdamat^{-1}
        \D
        \Hmat^\top
    \right)_j
    =
    \frac{1}{\sqrt d}
    \sum_{k\in\Cco(\X)}
    \frac{L_k}{\lambda_k}
    \varepsilon_k
    h_{kj}.
\end{align}
This is a centered Rademacher sum.
By Hoeffding's inequality~\citep{hoeffding1963prob},
\begin{align}
    \Pr \left(|Y_j|>u \right)
    \le
    2 \exp
    \left(
        - \frac{d u^2}{2R_\Lambda^2}
    \right),
\end{align}
where $R_\Lambda^2=\sum_{k\in\Cco(\X)}L_k^2/\lambda_k^2$.
Taking a union bound over $d$ coordinates gives
\begin{align}
    \Pr
    \left(
        \left\|
            \Xcoentry{t}{:}
            \Lambdamat^{-1}
            \D
            \Hmat^\top
        \right\|_\infty
        > u
    \right)
    \le
    2d \exp
    \left(
        - \frac{d u^2}{2R_\Lambda^2}
    \right).
\end{align}
Choosing
\begin{align}
    u^2
    =
    \frac{2R_\Lambda^2}{d}
    \left(
        \log \left( 2d \right)
        +
        \log
        \frac{1}{\delta}
    \right)
\end{align}
gives, with probability at least $1-\delta$,
\begin{align}
    \left\|
        \Xcoentry{t}{:}
        \Lambdamat^{-1}
        \D
        \Hmat^\top
    \right\|_\infty^2
    \le
    \frac{2R_\Lambda^2}{d}
    \left(
        \log \left( 2d \right)
        +
        \log
        \frac{1}{\delta}
    \right)
\end{align}
Because the persistent sparse CO vector is shared across tokens, summing over $T$ tokens yields
\begin{align}
    \objco(\Lambdamat,\D)
    \le
    \frac{2T}{d}
    R_\Lambda^2
    \left(
        \log \left( 2d \right)
        +
        \log
        \frac{1}{\delta}
    \right)
\end{align}
Since $R_\Lambda^2\le\Nco(\X)L_{\Lambda,\max}^2$, the stated $\Nco(\X)$-dependent bound follows.

\paragraph{Sampled rotation.}
Consider $N_s$ independently sampled sign matrices $\mathcal{D}_{N_s}=\{\D^{(1)},\ldots,\D^{(N_s)}\}$.
For a single sample,
\begin{align}
\Pr
\left(
\objco(\Lambdamat,\D)
>
T u^2
\right)
\le
2d
\exp
\left(
-
\frac{
d u^2
}{
2R_\Lambda^2
}
\right).
\end{align}
Hence,
\begin{align}
\Pr
\left(
\min_{\D\in\mathcal{D}_{N_s}}
\objco(\Lambdamat,\D)
>
T u^2
\right)
\le
\left[
2d
\exp
\left(
-
\frac{
d u^2
}{
2R_\Lambda^2
}
\right)
\right]^{N_s}.
\end{align}
Setting the right-hand side to $\delta$ gives
\begin{align}
u^2
=
\frac{
2R_\Lambda^2
}{
d
}
\left(
\log(2d)
+
\frac{1}{N_s}
\log\frac{1}{\delta}
\right).
\end{align}
Therefore, with probability at least $1-\delta$,
\begin{align}
\min_{\D\in\mathcal{D}_{N_s}}
\objco(\Lambdamat,\D)
\le
\frac{
2T
}{
d
}
R_\Lambda^2
\left(
\log(2d)
+
\frac{1}{N_s}
\log\frac{1}{\delta}
\right).
\end{align}
Using again $R_\Lambda^2\le\Nco(\X)L_{\Lambda,\max}^2$ gives the stated bound.
\end{proof}

\subsection{Proof of Proposition~\ref{prop:l2_scaling_regular_residual}}
\label{app:proof_l2_scaling_regular_residual}

\begin{proof}
Let $\boldsymbol{r}_{t}=\Xregentry{t}{:}\Lambdamat^{-1}$.
For a fixed token $t$ and coordinate $j$,
\begin{align}
    Y_{t,j}
    =
    \left(
    \boldsymbol{r}_{t}
    \D
    \Hmat^\top
    \right)_j
    =
    \frac{1}{\sqrt d}
    \sum_{k=1}^{d}
    r_{t,k}
    \varepsilon_k
    h_{k,j}.
\end{align}
This is a centered Rademacher sum.
By Hoeffding's inequality,
\begin{align}
    \Pr
    \left(
    |Y_{t,j}|>u
    \right)
    \le
    2
    \exp
    \left(
    -
    \frac{
    d u^2
    }{
    2
    \left\|
    \boldsymbol{r}_{t}
    \right\|_2^2
    }
    \right).
\end{align}
Taking a union bound over all $T$ tokens and $d$ coordinates, with probability at least $1-\delta$,
\begin{align}
    \left\|
    \boldsymbol{r}_{t}
    \D
    \Hmat^\top
    \right\|_\infty^2
    \le
    \frac{
    2\log(2dT/\delta)
    }{
    d
    }
    \left\|
    \boldsymbol{r}_{t}
    \right\|_2^2
\end{align}
simultaneously for all $t$.
Summing over $t$ gives
\begin{align}
    \sum_{t=1}^{T}
    \left\|
    \Xregentry{t}{:}
    \Lambdamat^{-1}
    \D
    \Hmat^\top
    \right\|_\infty^2
    &\le
    \frac{
    2\log(2dT/\delta)
    }{
    d
    }
    \sum_{t=1}^{T}
    \left\|
    \Xregentry{t}{:}
    \Lambdamat^{-1}
    \right\|_2^2 \\
    &=
    \frac{
    2\log(2dT/\delta)
    }{
    d
    }
    \left\|
    \Xreg
    \Lambdamat^{-1}
    \right\|_F^2.
\end{align}
Multiplying both sides by $\left\|\W\Lambdamat\right\|_2^2$ yields
\begin{align}
    \objreg(\Lambdamat,\D)
    \le
    \frac{2\log(2dT/\delta)}{d}
    \left\|
        \W
        \Lambdamat
    \right\|_2^2
    \left\|
        \Xreg
        \Lambdamat^{-1}
    \right\|_F^2 .
\end{align}
Thus, the regular quantity $\objreg$ is controlled by
$\|\W\Lambdamat\|_2^2\|\Xreg\Lambdamat^{-1}\|_F^2$
up to constants and logarithmic factors.

We derive a channel-wise scaling rule using the Frobenius surrogate
\begin{align}
    \mathcal{R}_2(\Lambdamat)
    =
    \left\|
    \W
    \Lambdamat
    \right\|_F^2
    \left\|
    \Xreg
    \Lambdamat^{-1}
    \right\|_F^2 .
\end{align}
Let $a_k=\|\Xregentry{:}{k}\|_2^2$ and $b_k=\|\Wentry{:}{k}\|_2^2$.
Then,
\begin{align}
    \mathcal{R}_2(\Lambdamat)
    =
    \left(
    \sum_{k=1}^{d}
    \lambda_k^2 b_k
    \right)
    \left(
    \sum_{k=1}^{d}
    \frac{a_k}{\lambda_k^2}
    \right).
\end{align}
By Cauchy--Schwarz,
\begin{align}
    \left(
    \sum_{k=1}^{d}
    \lambda_k^2 b_k
    \right)
    \left(
    \sum_{k=1}^{d}
    \frac{a_k}{\lambda_k^2}
    \right)
    \ge
    \left(
    \sum_{k=1}^{d}
    \sqrt{a_k b_k}
    \right)^2.
\end{align}
Equality holds when $\lambda_k^2b_k\propto a_k/\lambda_k^2$, or equivalently,
$\lambda_k^4\propto a_k/b_k$.
Thus, up to a common positive scalar,
\begin{align}
    \lambda_k
    =
    \left(
    \frac{a_k}{b_k}
    \right)^{1/4}
    =
    \sqrt{
    \frac{
    \left\|
    \Xregentry{:}{k}
    \right\|_2
    }{
    \left\|
    \Wentry{:}{k}
    \right\|_2
    }
    }.
\end{align}
This proves the stated $L_2$ scaling rule.
\end{proof}

\section{Full Results Across Models}
\label{app:full_results}

Table~\ref{tb:evaluation_full} reports C4 perplexity and the individual zero-shot task accuracies omitted from Table~\ref{tb:evaluation}.
All settings and baselines are identical to those in Section~\ref{sec:experiments}.

\begin{table}[p]
\caption{
\label{tb:evaluation_full} Full W4A4 results with GPTAQ weight quantization and KV4 across model families and scales.
}
\centering
\renewcommand{\arraystretch}{1.1}
\resizebox{\textwidth}{!}{
\begin{tabular}{c|c c|c c|c c c c c c c} \hline
\multirow{2}{*}{Model} & \multirow{2}{*}{Bits} & \multirow{2}{*}{Method} & \multicolumn{2}{c|}{PPL $\downarrow$} & \multicolumn{7}{c}{Accuracy (\%) $\uparrow$} \\ \cline{4-12}
& & & WT2 & C4 & PIQA & ARC-E & ARC-C & HS & WG & LB & Avg. \\ \hline
\multirow{5}{*}{Llama-7B} & W16A16 & N/A & 5.67 & 8.03 & 79.16 & 72.85 & 44.62 & 76.19 & 70.01 & 73.51 & 69.39 \\ \cdashline{2-12}[2pt/1pt]
& \multirow{4}{*}{W4A4} & \quarot & 6.12 & 8.84 & 77.53 & 68.01 & 41.38 & 72.82 & 66.77 & 70.54 & 66.17 \\
& & \smoothrot & 6.13 & 8.87 & 77.58 & 67.72 & 40.27 & 72.66 & 67.88 & 70.83 & 66.16 \\
& & \spinquant & 6.19 & 8.95 & 76.44 & 69.07 & 42.32 & 72.75 & 67.01 & 69.51 & 66.18 \\
& & \proposal & \textbf{6.09} & \textbf{8.76} & 77.15 & 69.65 & 41.30 & 73.01 & 67.64 & 71.28 & \textbf{66.67} \\ \hline
\multirow{5}{*}{Llama-13B} & W16A16 & N/A & 5.05 & 7.46 & 80.20 & 74.75 & 47.87 & 79.08 & 72.61 & 76.19 & 71.78 \\ \cdashline{2-12}[2pt/1pt]
& \multirow{4}{*}{W4A4} & \quarot & 5.42 & 8.00 & 78.51 & 72.01 & 46.16 & 76.66 & 69.85 & 73.78 & 69.49 \\
& & \smoothrot & 5.42 & 8.03 & 78.62 & 70.58 & 43.94 & 76.67 & 70.88 & 73.86 & 69.09 \\
& & \spinquant & 5.44 & 8.13 & 78.62 & 71.76 & 46.33 & 76.89 & 69.14 & 73.12 & 69.31 \\
& & \proposal & \textbf{5.37} & \textbf{7.96} & 78.29 & 72.18 & 47.01 & 76.68 & 70.72 & 74.60 & \textbf{69.91} \\ \hline
\multirow{5}{*}{Llama2-7B} & W16A16 & N/A & 5.47 & 7.92 & 79.11 & 74.58 & 46.25 & 76.00 & 69.06 & 73.90 & 69.82 \\ \cdashline{2-12}[2pt/1pt]
& \multirow{4}{*}{W4A4} & \quarot & 5.96 & 8.83 & 76.55 & 70.54 & 41.98 & 72.78 & 65.43 & 70.66 & 66.32 \\
& & \smoothrot & 5.98 & 8.87 & 76.39 & 71.00 & 41.72 & 72.89 & 66.14 & 70.56 & 66.45 \\
& & \spinquant & 6.06 & 9.03 & 76.82 & 69.32 & 41.98 & 72.28 & 64.80 & 70.70 & 65.98 \\
& & \proposal & \textbf{5.94} & \textbf{8.80} & 77.80 & 71.09 & 41.04 & 72.31 & 66.77 & 70.52 & \textbf{66.59} \\ \hline
\multirow{5}{*}{Llama2-13B} & W16A16 & N/A & 4.86 & 7.36 & 80.52 & 77.53 & 49.06 & 79.38 & 72.14 & 76.77 & 72.57 \\ \cdashline{2-12}[2pt/1pt]
& \multirow{4}{*}{W4A4} & \quarot & 5.24 & 8.01 & 78.73 & 73.44 & 45.14 & 76.31 & 70.72 & 74.89 & 69.87 \\
& & \smoothrot & 5.26 & 8.04 & 78.40 & 75.51 & 46.93 & 76.25 & 69.14 & 74.67 & 70.15 \\
& & \spinquant & 5.26 & 8.14 & 77.80 & 75.38 & 46.84 & 76.79 & 69.77 & 74.81 & 70.23 \\
& & \proposal & \textbf{5.23} & \textbf{7.99} & 78.51 & 74.54 & 47.70 & 76.93 & 70.64 & 74.69 & \textbf{70.50} \\ \hline
\multirow{5}{*}{Llama3-8B} & W16A16 & N/A & 5.94 & 8.80 & 80.85 & 77.74 & 53.41 & 79.16 & 72.61 & 76.05 & 73.30 \\ \cdashline{2-12}[2pt/1pt]
& \multirow{4}{*}{W4A4} & \quarot & 7.45 & 12.16 & 76.66 & 68.77 & 41.72 & 73.51 & 65.59 & 67.98 & 65.71 \\
& & \smoothrot & 7.57 & 12.45 & 75.30 & 67.05 & 43.00 & 73.34 & 69.06 & 67.38 & 65.85 \\
& & \spinquant & 7.33 & 12.04 & 76.93 & 74.92 & 46.50 & 75.45 & 67.88 & 68.50 & \textbf{68.36} \\
& & \proposal & \textbf{7.20} & \textbf{11.71} & 77.09 & 73.32 & 45.56 & 74.85 & 68.35 & 69.45 & 68.10 \\ \hline
\multirow{5}{*}{Llama3.2-1B} & W16A16 & N/A & 9.41 & 13.05 & 74.54 & 60.61 & 36.35 & 63.68 & 60.69 & 62.95 & 59.80 \\ \cdashline{2-12}[2pt/1pt]
& \multirow{4}{*}{W4A4} & \quarot & 13.81 & 21.52 & 67.74 & 50.88 & 30.20 & 53.19 & 53.91 & 41.57 & 49.58 \\
& & \smoothrot & 14.13 & 22.05 & 67.19 & 50.76 & 30.38 & 52.42 & 53.51 & 37.51 & 48.63 \\
& & \spinquant & 13.33 & 20.46 & 66.87 & 51.47 & 30.80 & 55.19 & 52.88 & 45.02 & 50.37 \\
& & \proposal & \textbf{12.92} & \textbf{19.87} & 68.55 & 51.81 & 31.83 & 54.17 & 53.04 & 44.83 & \textbf{50.71} \\ \hline
\multirow{5}{*}{Llama3.2-3B} & W16A16 & N/A & 7.55 & 10.46 & 77.48 & 71.72 & 45.99 & 73.58 & 70.01 & 70.44 & 68.20 \\ \cdashline{2-12}[2pt/1pt]
& \multirow{4}{*}{W4A4} & \quarot & 9.27 & 14.29 & 74.05 & 61.99 & 38.14 & 67.53 & 63.06 & 61.93 & 61.12 \\
& & \smoothrot & 9.35 & 14.52 & 72.80 & 60.19 & 38.31 & 66.42 & 61.09 & 60.66 & 59.91 \\
& & \spinquant & 9.19 & 14.36 & 73.72 & 63.93 & 41.21 & 68.22 & 63.77 & 61.32 & \textbf{62.03} \\
& & \proposal & \textbf{9.08} & \textbf{14.04} & 72.47 & 58.54 & 39.33 & 67.94 & 64.40 & 62.97 & 60.94 \\ \hline
\multirow{5}{*}{Mistral-7B-v0.3} & W16A16 & N/A & 5.35 & 8.28 & 82.26 & 78.24 & 52.22 & 80.47 & 73.88 & 75.33 & 73.73 \\ \cdashline{2-12}[2pt/1pt]
& \multirow{4}{*}{W4A4} & \quarot & 5.73 & 8.94 & 80.14 & 75.67 & 47.70 & 77.70 & 69.46 & 72.60 & 70.55 \\
& & \smoothrot & 5.77 & 8.99 & 80.41 & 73.95 & 45.56 & 77.40 & 68.75 & 72.70 & 69.80 \\
& & \spinquant & 5.74 & 8.95 & 80.41 & 75.88 & 49.15 & 77.96 & 70.01 & 73.61 & \textbf{71.17} \\
& & \proposal & \textbf{5.71} & \textbf{8.87} & 79.49 & 75.76 & 48.89 & 78.04 & 70.09 & 73.76 & 71.00 \\ \hline
\end{tabular}
}
\end{table}

\clearpage

\end{document}